%% file: camera_ready.tex
\documentclass{article}

\usepackage{microtype}
\usepackage{graphicx}
\usepackage{subfigure}
\usepackage{subcaption}
\usepackage{booktabs} 

\usepackage{hyperref}

\usepackage[accepted]{icml2026}

\usepackage{amsmath}
\usepackage{amssymb}
\usepackage{mathtools}
\usepackage{amsthm}

\usepackage[capitalize,noabbrev]{cleveref}

\theoremstyle{plain}
\newtheorem{theorem}{Theorem}[section]
\newtheorem{proposition}[theorem]{Proposition}

\theoremstyle{definition}
\newtheorem{definition}[theorem]{Definition}
\newtheorem{assumption}[theorem]{Assumption}
\theoremstyle{remark}
\newtheorem{remark}[theorem]{Remark}

\usepackage[textsize=tiny]{todonotes}

\newcommand{\E}{\mathbb{E}}

\newcommand{\Rtau}{R(\tau)}
\newcommand{\Stau}{S(\tau)}

\usepackage{setspace}
\usepackage{titlesec}
\usepackage{enumitem}

\titlespacing{\paragraph}{%
  0pt  
}{%
  -0.2em  
}{%
  1em  
}

\icmltitlerunning{Optimal Token Baseline}

\begin{document}

\twocolumn[
  \icmltitle{The Optimal Token Baseline: Variance Reduction for Long-Horizon LLM-RL}



  \icmlsetsymbol{equal}{*}
  \begin{icmlauthorlist}
    \icmlauthor{Yingru Li}{equal}
    \icmlauthor{Jiawei Xu}{equal,cuhksz}
    \icmlauthor{Ziniu Li}{equal,cuhksz}
    \icmlauthor{Jiacai Liu}{fdu}
    \icmlauthor{Wei Liu}{hkust}
    \icmlauthor{Yuxuan Tong}{bd}
    \icmlauthor{Longtao Zheng}{ntu}
    \icmlauthor{Zhenghai Xue}{ntu}
    \icmlauthor{Yaxiang Zhang}{nus}
    \icmlauthor{Tianle Cai}{bd}
    \icmlauthor{Ge Zhang}{bd}
    \icmlauthor{Qian Liu}{}
    \icmlauthor{Baoxiang Wang}{cuhksz,vector}
  \end{icmlauthorlist}

  \icmlaffiliation{bd}{ByteDance}
  
  \icmlaffiliation{nus}{National University of Singapore}
  \icmlaffiliation{ntu}{Nanyang Technological University}
  \icmlaffiliation{fdu}{Fudan University}
  \icmlaffiliation{hkust}{Hong Kong University of Science and Technology}
  \icmlaffiliation{cuhksz}{The Chinese University of Hong Kong, Shenzhen}
  \icmlaffiliation{vector}{Vector Institute}

  \icmlcorrespondingauthor{Yingru Li}{szrlee@gmail.com}

  \icmlkeywords{Machine Learning, ICML}

  \vskip 0.3in
]



\printAffiliationsAndNotice{}  

\begin{abstract}
Reinforcement Learning (RL) for Large Language Models (LLMs) often suffers from training collapse in long-horizon tasks due to exploding gradient variance. To mitigate this, a baseline is commonly introduced for advantage computation; however, traditional value models remain difficult to optimize, and standard group-based baselines overlook sequence heterogeneity. Although classic optimal baseline theory can achieve global variance reduction, it neglects token heterogeneity and requires prohibitive gradient-based computation. In this work, we derive the Optimal Token Baseline (OTB) from first principles, proving that gradient updates should be weighted inversely to their cumulative gradient norm. To ensure efficiency, we propose the Logit-Gradient Proxy that approximates the gradient norm using only forward-pass probabilities. Our method achieves training stability and matches the performance of large group sizes ($N=32$) with only $N=4$, reducing token consumption by over 65\% across single-turn and tool-integrated reasoning tasks.
\end{abstract}

\input{sec/intro}

\input{sec/related}

\input{sec/variance}

\input{sec/method}

\input{sec/theroy}

\input{sec/exp}

\input{sec/conclusion}






\section*{Acknowledgements}

Baoxiang Wang and Jiawei Xu are partially supported by the National Natural Science Foundation of
China (72394361) and the Shenzhen Science and Technology Program (JCYJ20250604141218024, JCYJ20250604141032005).

\section*{Impact Statement}

This work advances the RL training for LLM with a token-level advantage estimator, focusing on stable training and sample efficiency. 
There are many potential societal consequences of our work, none
which we feel must be specifically highlighted here.

\nocite{langley00}

\bibliography{ref}
\bibliographystyle{icml2026}

\newpage
\appendix
\onecolumn

\input{appendix/ogb}

\input{appendix/causal_pg}

\input{appendix/otb}

\input{appendix/offpolicy_otb}

\input{appendix/variance_proxy}
\input{appendix/exp}


\end{document}

%% file: sec/intro.tex
\section{Introduction}
\label{sec:intro}

Reinforcement Learning (RL) has become the standard paradigm for aligning Large Language Models (LLMs) in complex reasoning~\cite{guo2025deepseek,zeng2025simplerl} and agentic tasks~\cite{jin2025search,xue2025simpletir}. However, as tasks grow in complexity and generation horizons expand, practitioners frequently observe \textbf{Training Collapse} in RL training, where the gradient norm suddenly surges, causing model performance to crater. As shown in~\cref{fig:single_first}, models often learn effectively for hundreds of steps before experiencing an unexpected gradient spike, leading to a severe degradation in scores.

\begin{figure}[t]
    \centering
    \includegraphics[width=1.0\linewidth]{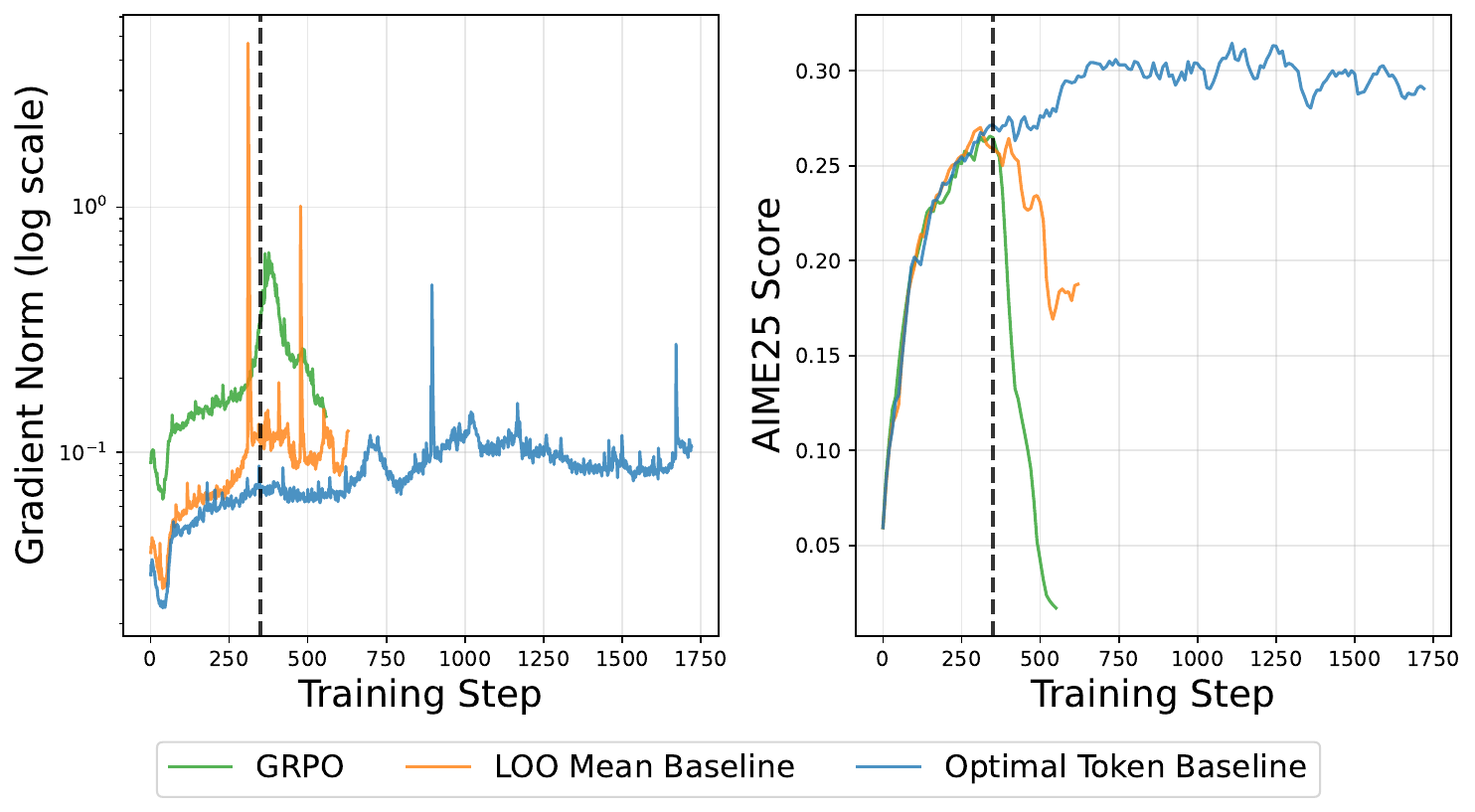}
    \caption{Gradient Norm and AIME25 Score under Single-Turn Reasoning. We adopt full on-policy training on the Qwen3-8B-Base. The vertical dotted line indicates the point at which the compared methods collapse, coinciding with a sudden surge in gradient norm. Notably, our Optimal Token Baseline yields a stable gradient norm, resulting in stable training and a higher score.}
    \label{fig:single_first}
\end{figure}

A key reason for such gradient norm surges is the variance in gradient estimation during policy update~\cite{weaver2013optimal}. To reduce gradient variance, introducing a baseline for advantage computation is a common strategy. While actor-critic methods employ a separate value model to estimate this baseline, training such an auxiliary model is notoriously difficult in LLM-RL \citep{li2023remax}. Therefore, recent methods have favored simpler REINFORCE-style baselines. For instance, ReMax \citep{li2023remax} uses the greedy response as its baseline, while GRPO~\cite{shao2024deepseekmath} and RLOO~\cite{ahmadian2024back} calculate the baseline from the average reward of multiple sequences. However, these methods assume every sequence in a group is equally informative and overlook the \textbf{sequence heterogeneity}. Prior work~\cite{DAYAN199145,greensmith2004variance,weaver2013optimal} derives the Optimal Global Baseline which weights reward expectations by the total gradient magnitude of each sequence. However, it requires prohibitive gradient-based computations and critically neglects \textbf{token heterogeneity}: tokens at the same generation step exhibit varying gradient magnitudes across different sequences. It is also \textbf{acausal}, as the total gradient magnitude cannot be known during intermediate token generation steps. 

In this work, we project the global variance-minimization objective into a causal form to derive the \textbf{Optimal Token Baseline (OTB)}. The key to OTB lies in its use of accumulated gradient magnitude as the weights for each token. Since the gradient variance of a specific token is not an isolated value but rather an accumulation of stochasticity from the start of the sequence, OTB explicitly accounts for this accumulated noise to achieve effective variance reduction. Our contributions are summarized as follows:

\begin{itemize}[leftmargin=*,noitemsep,topsep=0pt]
\vspace{-0.1cm}
\item \textbf{Establish the Optimal Token Baseline:} We propose OTB to ensure stable training and extreme sampling efficiency, achieving high performance with smaller group sizes ($N=4$) and reducing the total token budget. 
\item \textbf{Propose the Logit-Gradient Proxy:} We introduce a computationally ``free'' proxy that estimates the gradient norm using only forward-pass probabilities without requiring additional backward passes. 
\item \textbf{Provide Theoretical Guarantees:} We conduct rigorous analysis on the unbiasedness and variance reduction effects of our OTB, which provide solid theoretical support beyond the empirical demonstration.
\vspace{-0.1cm}
\end{itemize}

%% file: sec/related.tex
\section{Related Works}
\label{sec:related}

Training stability is a critical cornerstone of reliable RL algorithms for LLMs. Two main factors that cause training instability are the bias and variance inherent in gradient estimates used for policy updates. Importance sampling~\cite{yao2025offpolicy,liu-li-2025-rl-collapse} and trust region mask~\cite{li2025trust} are widely adopted techniques to mitigate policies mismatch bias—a common source of instability arising from discrepancies between behavior and training policies—thus improving the stability of RL training.

Variance reduction represents another key avenue for enhancing training stability. Applying the learning rate decay scheduler~\cite{lr_decay} can mitigate the signal-to-noise ratio, thereby facilitating variance reduction. Another prevalent strategy is to apply a baseline for advantage estimation, which helps dampen the stochasticity of gradient signals. Value functions~\cite{schulman2015high,schulman2017proximal} are widely used for variance reduction in traditional RL; however, learning a value model is notoriously challenging in LLM-RL, due to the high dimensionality of token vocabularies. As an alternative, group-based baselines~\cite{shao2024deepseekmath,ahmadian2024back,zheng2025group} have been proposed, which compute a baseline from the average reward of a group of sequences. A critical limitation of these methods lies in their neglect of sequence heterogeneity: they treat all sequences within a group as equally informative, failing to account for the varying gradient variance contributions of individual sequences.

To address sequence heterogeneity, prior work \cite{DAYAN199145,greensmith2004variance,weaver2013optimal} has formalized the Optimal Global Baseline (OGB). It weights reward expectations by the total gradient magnitude of individual sequences, theoretically achieving global variance reduction. Nevertheless, OGB relies on expensive gradient-based calculations, which hinders its practical deployment. Optimal Reward Baseline (OPO)~\cite{hao2025policy} attempts to address this inefficiency by using sequence length as a proxy for gradient approximation. Another critical flaw of OGB is the neglect of token heterogeneity: tokens generated at the same step often exhibit different gradient magnitudes across distinct sequences. Additionally, OGB is inherently acausal as the total gradient magnitude of a sequence can only be computed after full sequence generation.

To address these limitations, we derive the causal Optimal Token Baseline, a dynamic mechanism for variance reduction. Using our Logit-Gradient Proxy, we efficiently estimate accumulated gradient variance of each token, enabling inverse weighting of updates.

%% file: sec/variance.tex
\section{Gradient Variance for Training Instability}
\label{sec:graident_variance}

This instability of RL in long-horizon LLM alignment is not merely a tuning issue, but a structural consequence of how \textit{gradient variance scales with trajectory length and reward sparsity}. This can be analyzed from the lens of the standard REINFORCE~\cite{williams1992simple} gradient estimator: $\hat{g}(\tau) = \Rtau \cdot \Stau$, where $\tau = (x, y)$ is the trajectory comprising the prompt $x$ and the generated response $y = (y_1, y_2, \ldots, y_T)$. The horizon length $T = |y|$ is a random variable. $R(\tau)$ is the total reward and $S(\tau) = \sum_{t=1}^T s_t$ is the total trajectory score, where $s_t = \nabla_\theta \log \pi_{\theta}(y_t \mid x, y_{<t})$.
The noise in the gradient is mainly driven by two factors:

\begin{itemize}[leftmargin=*,noitemsep,topsep=0pt]
\vspace{-0.1cm}
    \item \textbf{Long Horizons Accumulate Noise:} $\Stau$ is a sum of random variables. As the horizon $T$ grows, the variance of this sum increases like a random walk.
    \item \textbf{Sparse Rewards Amplify Noise:} $\Rtau$ is observed at the very end, resulting in every step $t$ being scaled by the same, potentially high-variance, final outcome.
\vspace{-0.1cm}
\end{itemize}

\begin{figure}[b]
    \centering
    \includegraphics[width=0.95\linewidth]{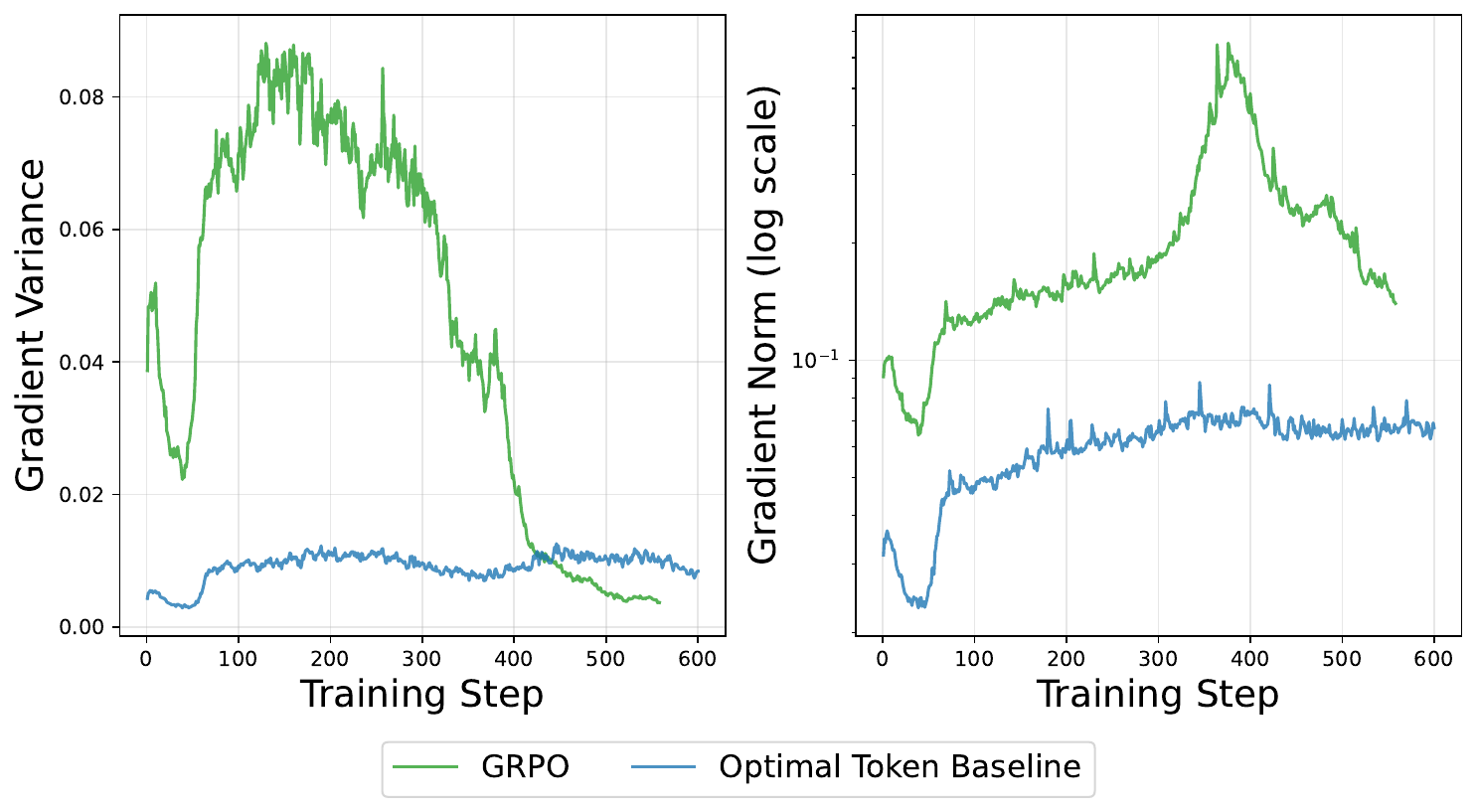}
    \caption{High gradient variance triggers a sudden surge in the gradient norm, leading to an eventual training collapse. The calculation of gradient variance is introduced in Appendix~\ref{app:variance_proxy}.}
    \label{fig:single_large_variance}
\end{figure}

A common approach to reduce gradient variance is to introduce a baseline $B(x)$, yielding $\tilde{g}(\tau) =(R(\tau) - B(x)) \cdot  S(\tau)$.
Group-mean baselines are widely adopted for LLM-RL, which apply the empirical estimator of the expected reward as the baseline: $B(x) = \mathbb{E}_{\tau}[R(\tau)]$, aiming to normalize rewards across sequences. These methods operate on the assumption that every sequence within a group is equally informative. However, they still suffer from large gradient variance and training collapse, as shown in~\cref{fig:single_large_variance}.

The key reason for the failure of group-mean baselines stems from their neglect of \textbf{sequence heterogeneity}. Specifically, sequences in the same group exhibit varying levels of \textbf{Total Energy}, which we define as $ \|S(\tau)\|^2$, as shown in~\cref{fig:total_energy}.

\begin{figure}[htbp]
    \centering
    \includegraphics[width=0.95\linewidth]{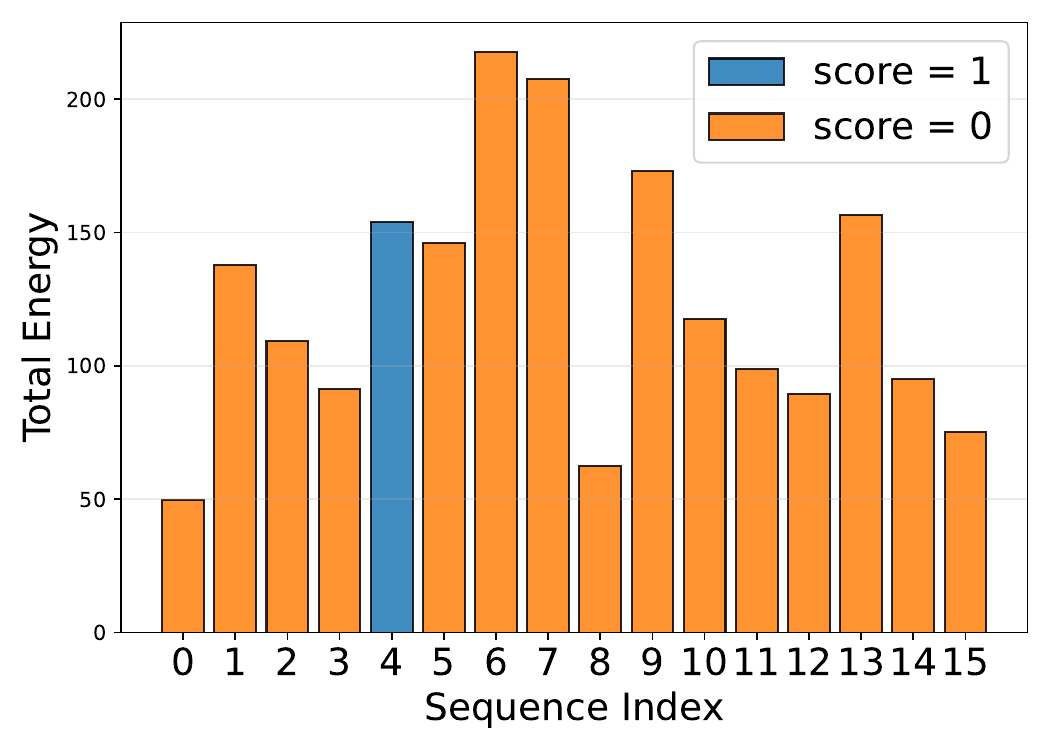}
    \caption{Different sequences in the same group exhibit distinct energy. The calculation of total energy is provided in Appendix~\ref{app:ogb_imple}.}
    \label{fig:total_energy}
\end{figure}

To address sequence heterogeneity, previous work~\cite{DAYAN199145,greensmith2004variance,weaver2013optimal,li2023remax} has applied the principle of variance reduction and derived the Optimal Global Baseline (OGB). It is the reward expectation weighted by the total energy $ \|S(\tau)\|^2$:

\begin{equation}
\label{eq:ogb}
    B_{global}^*(x) = \frac{\mathbb{E}_{\tau} \left[ R(\tau) \cdot \|S(\tau)\|^2\right]}{\mathbb{E}_{\tau} \left[ \|S(\tau)\|^2 \right]}.
\end{equation}

It points out that sequences with total score $S(\tau)$ contribute quadratically more to the variance. The baseline $B_{global}^*(x)$ must fit these high-energy sequences most accurately to prevent instability. We provide the detailed analysis on OGB for trajectory-level variance reduction in Appendix~\ref{app:ogb_var_reduction}.

However, OGB remains suboptimal as it overlooks \textbf{token heterogeneity}. Specifically, tokens at the same generation step $t$ contribute varying levels of energy across different sequences. Therefore, the total energy fails to accurately represent the impact of individual tokens. To quantify the energy cumulation effect up to time step $t$, we define the \textbf{Realized Energy} as follows:

\begin{definition}[Realized Energy]
\label{def:realized_energy}
\begin{equation}
    W_t \equiv \sum_{j=1}^t \|s_j\|^2=\sum_{j=1}^t \|\nabla_\theta \log \pi_{\theta}(y_j \mid x, y_{<t})\|^2.
\end{equation}
\end{definition}

This realized energy serves as a stable proxy for the sequence's accumulated instability, effectively measuring the total gradient magnitude expended up to the current step. 
As illustrated in~\cref{fig:realized_energy}, the energy profile of a sequence is not static; sequences that appear low energy at step $t$ may surge in energy by step $t+k$. Since the rankings of realized energy across sequences frequently intersect over time, the total energy of each sequence cannot adequately characterize the impact of each token.

\begin{figure}[htbp]
    \centering
    \includegraphics[width=0.95\linewidth]{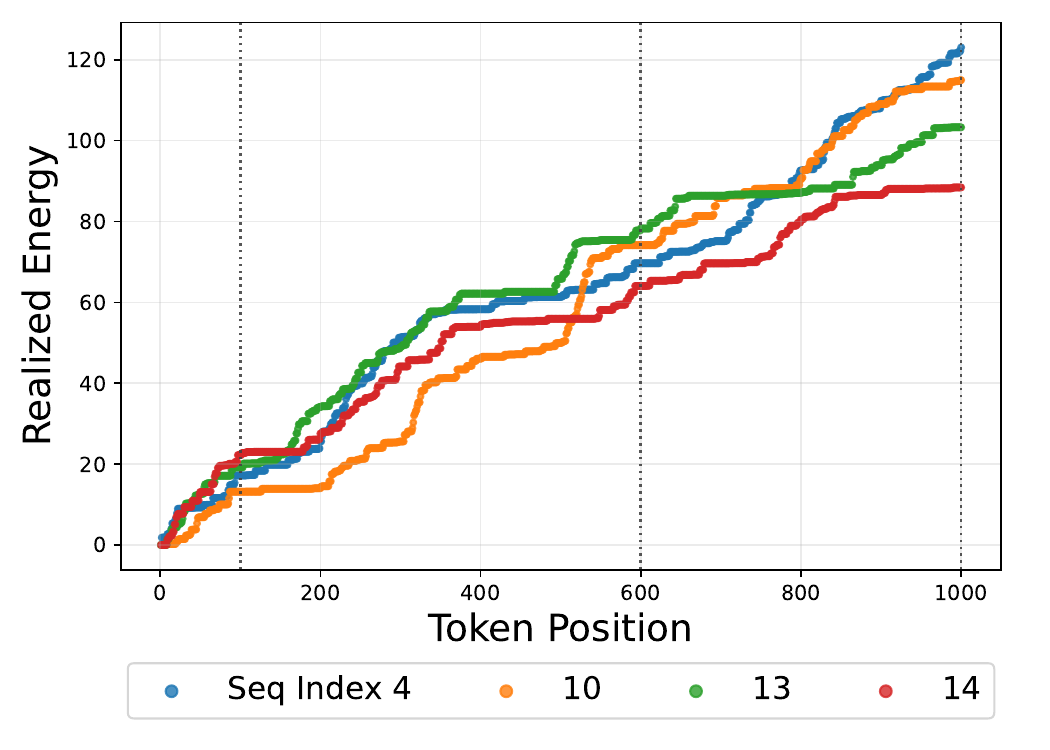}
    \caption{Individual tokens contribute varying energy. Within a single generation step $t$, sequences exhibit distinct energy profiles. For instance, at $t=100$, the realized energy ranks from lowest to highest as Seq $10 < 4 < 13 < 14$. However, this ranking shifts significantly by $t=600$: the order becomes Seq $14 < 4 < 10 < 13$, with further re-ranking occurring by $t=1000$. The calculation of realized energy is provided in~\cref{sec:logit-graident}.}
    \label{fig:realized_energy}
\end{figure}

Another flaw of OGB is inherent \textbf{acausality}. Namely, we cannot know the total energy $ \|S(\tau)\|^2$ at the intermediate token generation step  $t$.

%% file: sec/method.tex
\section{The Optimal Token Baseline}
\label{sec:otb}


To address these limitations and achieve further gradient variance reduction, we first switch to the causal policy gradient~\cite{williams1992simple} (see details in Appendix~\ref{app:causal_pg}). Specifically, we introduce the reward-to-go $G_t = \sum_{k=t}^T r_k$, and a time-dependent baseline sequence $\{B_t\}_{t=1}^T$. This yields the causal gradient estimator $\tilde{g}_{c}(\tau) = \sum_{t=1}^T s_t (G_t - B_t)$.

Our main goal is to minimize the gradient variance:

\vspace{-0.5cm}
\begin{align}
\mathbb{V}[\tilde{g}_{c}(\tau)] &\triangleq \mathbb{E}_{\tau}\left[ \| \tilde{g}_c(\tau) - \mathbb{E}_{\tau}[\tilde{g}_c(\tau)] \|^2 \right] \\ \nonumber
&= \mathbb{E}_{\tau}\left[ \| \tilde{g}_c(\tau) \|^2 \right] - \| \mathbb{E}_{\tau}[\tilde{g}_c(\tau)] \|^2 \\ \nonumber
&= \mathbb{E}_{\tau}\left[ \| \tilde{g}_c(\tau) \|^2 \right] - \| \nabla_{\theta} \mathcal{J}(\theta) \|^2,
\end{align}
\vspace{-0.5cm}

where $\nabla_{\theta} \mathcal{J}(\theta)$ is the true gradient and is independent of the baselines. Thus minimizing the gradient variance is mathematically equivalent to minimizing the expected squared norm of the gradient estimator $\mathbb{E}_{\tau}[\| \tilde{g}_c(\tau) \|^2]$.






To derive each optimal time-dependent baseline $B_t$, we apply the realized energy $W_t$ from Definition~\ref{def:realized_energy} and formulate the following per-time-step optimization objective:

\begin{align}
\label{eq:obj_otb}
    \min_{B_t} \mathcal{J}(B_t) &= \E_{y_{\leq t} \sim \pi_{\theta}(\cdot \mid x)} \left[ \sum_{j=1}^t \| s_j  \|^2 (G_t - B_t)^2· \right] \\ \nonumber
    &= \E_{y_{\leq t} \sim \pi_{\theta}(\cdot \mid x)} \left[ W_t (G_t - B_t)^2 \right],
\end{align}

where the full derivation and its connection to the computationally intractable objective are deferred to Appendix~\ref{app:dev_otb}. By solving this objective, we obtain the Optimal Token Baseline under specific structural conditions:


\begin{theorem}[Optimal Token Baseline]
\label{theorem:otb}
The Optimal Token Baseline (OTB) at step $t$ is the weighted centroid of the reward-to-go, weighted by the realized energy:
\begin{equation}
B_t^*(x) = \frac{\mathbb{E}_{ y_{\leq t} \sim \pi_{\theta}(\cdot \mid x)} [ G_t \cdot W_t ]}{\mathbb{E}_{y_{\leq t} \sim \pi_{\theta}(\cdot \mid x) } [ W_t ]}.
\label{eq:theoren_otb}
\end{equation}
\end{theorem}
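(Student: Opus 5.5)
The plan is to treat \eqref{eq:obj_otb} as a one-dimensional weighted least-squares problem in the scalar $B_t$. Since $B_t$ is allowed to depend only on the prompt $x$, it is a constant under the expectation over $y_{\leq t}\sim\pi_\theta(\cdot\mid x)$, and the minimizer is a weighted mean. The reward-to-go $G_t$ depends on the continuation $y_{>t}$ as well. So we interpret the expectation in \eqref{eq:obj_otb} as taken over the full trajectory (equivalently, replace $G_t$ by its conditional expectation given $y_{\leq t}$). By the tower property this changes neither the objective's dependence on $B_t$ nor the final formula, because $W_t$ is $y_{\leq t}$-measurable and hence $\E[G_t W_t]=\E[\E[G_t\mid y_{\leq t}]\,W_t]$.

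First expand the objective as a quadratic in $B_t$:
\begin{equation*}
\mathcal{J}(B_t)=\E[W_t G_t^2]-2B_t\,\E[W_t G_t]+B_t^2\,\E[W_t].
\end{equation*}
This requires finiteness of $\E[W_t]$, $\E[W_tG_t]$ and $\E[W_tG_t^2]$. These hold under bounded rewards and finite-horizon, finite-vocabulary assumptions, since each $\|s_j\|^2$ is finite for a softmax policy with nonzero probabilities. Then note that $W_t=\sum_{j\le t}\|s_j\|^2\ge 0$, so $\E[W_t]\ge 0$. In the nondegenerate case $\E[W_t]>0$ the quadratic is strictly convex. Setting $\partial\mathcal{J}/\partial B_t=-2\E[W_tG_t]+2B_t\E[W_t]=0$ gives exactly \eqref{eq:theoren_otb}, and strict convexity makes it the unique global minimizer.

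Second, I would remark on the degenerate case $\E[W_t]=0$. There $W_t=0$ almost surely, so every score up to step $t$ vanishes, the objective is constant in $B_t$, and any baseline is optimal. We exclude this by assumption, or adopt the convention that $B_t^*$ is arbitrary.

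The main obstacle is not this calculus but the legitimacy of the surrogate objective \eqref{eq:obj_otb}. That is, we must justify that minimizing $\E\|\tilde g_c\|^2$ per time step reduces to the weighted objective with weight $W_t$. The paper defers this to Appendix~\ref{app:dev_otb} and states it holds ``under specific structural conditions.'' To prove the theorem as stated, I would take \eqref{eq:obj_otb} as given. If asked to connect it to the true variance, I would try the following. Expand $\|\sum_t s_t(G_t-B_t)\|^2$, drop the cross terms $\langle s_j,s_k\rangle$ for $j\neq k$ using the martingale property $\E[s_k\mid y_{<k}]=0$ (valid only when the multiplying factors are measurable with respect to the earlier history, which is the structural approximation). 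Then regroup the diagonal terms so that the coefficient attached to $(G_t-B_t)^2$ becomes the accumulated $\sum_{j\le t}\|s_j\|^2$. I expect this regrouping to require an approximation, such as $G_j-B_j\approx G_t-B_t$ locally, rather than an exact identity.
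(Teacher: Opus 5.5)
Your core argument is the same as the paper's. In Section~\ref{sec:var_reduction_otb} the paper takes \eqref{eq:obj_otb} as given, differentiates in $B_t$, and solves $-2\mathbb{E}[W_t(G_t-B_t)]=0$. Your additions are correct refinements that the paper omits: strict convexity for uniqueness, the degenerate case $\mathbb{E}[W_t]=0$, and the tower-property reading of $G_t$ under an expectation over $y_{\le t}$.

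Your optional sketch linking \eqref{eq:obj_otb} to the true variance, however, runs opposite to the paper's Appendix~\ref{app:dev_otb} and would not land on $W_t$.
\begin{enumerate}
\item Suppose you drop the cross terms $\langle s_j,s_k\rangle$, $j\neq k$. What remains, $\sum_t \|s_t\|^2 (G_t-B_t)^2$, decouples across $t$. Its minimizer is the isolated-energy baseline $\mathbb{E}[G_t\|s_t\|^2]/\mathbb{E}[\|s_t\|^2]$, which the paper explicitly contrasts with OTB. Dropping those terms is also an assumption, not a consequence of the martingale property, since $G_j$ depends on $y_k$ for $k>j$.
\item In the paper the accumulation comes precisely from the cross terms. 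Starting from the first-order condition $\mathbb{E}[\langle \tilde g_c, s_t\rangle]=0$, it invokes Gradient Consistency, $\sum_{k\neq t}(G_k-B_k)\langle s_k,s_t\rangle\approx (G_t-B_t)\sum_{k\neq t}\|s_k\|^2$. This yields the weight $W_T$, summing over all positions rather than only $j\le t$.
\item The paper then needs a separate causal step. Under Assumption~\ref{assump:future_energy}, the $W_T$-weighted baseline is a convex combination of the $W_t$-weighted baseline and the plain mean $\bar G_t$ (Proposition~\ref{prop:convex}). The paper adopts the $W_t$ component.
\item Your regrouping $G_j-B_j\approx G_t-B_t$, applied to the diagonal sum, likewise gives $W_T$. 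Nothing in it singles out $j\le t$, so your sketch is missing this causal truncation.
\end{enumerate}
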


The key to OTB lies in using realized energy as the weights for each token, as the gradient variance of a specific token is not an isolated value but rather an accumulation of stochasticity from the start of the sequence. OTB accounts for this accumulated noise for variance reduction.



\subsection{Logit-Gradient Proxy}
\label{sec:logit-graident}

Computing the squared norm of the true parameter gradient, $\|s_t\|^2$, requires a separate backward pass for each generated token, which is computationally prohibitive for LLMs. 
To implement OTB efficiently, we introduce the \textbf{Logit Gradient Norm} as a computable proxy to approximate the full parameter gradient norm. 
We assume the norm of the update to the whole parameters $\theta$ is proportional to the norm of the update to the logits $z_t$  and provide the justification in~\cref{sec:just_proxy}. Formally, the gradient of the log-likelihood with respect to the logits $z_t \in \mathbb{R}^{|\mathcal{V}|}$ is given by:


\begin{equation}
\delta_t \equiv \nabla_{z_t} \log \pi_{\theta}(y_t | x, y_{<t}) = \mathbf{e}_{y_t}- \boldsymbol{\pi}_t,
\end{equation}

where $\boldsymbol{\pi}_t \in \Delta^{|\mathcal{V}|}$ represent the probability distribution vector over the vocabulary $\mathcal{V}$, and $\mathbf{e}_{y_{t}}$ is the one-hot encoding of the token $y_t$ sampled at step $t$. 
Based on this, we propose the Logit-Gradient Proxy as the squared norm of $\delta_t$: 

\begin{proposition}[Logit-Gradient Proxy]
\label{pro:logit-graident}
We use the Logit Gradient Norm as a proxy and derive a closed-form:

\vspace{-3cm}
\begin{align}
\hat{w}_t =   \| \delta_t \|^2 &\equiv (\mathbf{e}_{y_t} - \boldsymbol{\pi}_t)^\top (\mathbf{e}_{y_t} - \boldsymbol{\pi}_t) \\ \nonumber
&= \mathbf{e}_{y_t}^\top \mathbf{e}_{y_t} - 2 \mathbf{e}_{y_t}^\top \boldsymbol{\pi}_t + \boldsymbol{\pi}_t^\top \boldsymbol{\pi}_t \\ \nonumber
&= 1 - 2\pi_{\theta}(y_t \mid x, y_{<t}) + \| \boldsymbol{\pi}_t \|_2^2.
\end{align}
The $\| \boldsymbol{\pi}_t \|_2^2 = \sum\limits_{v \in \mathcal{V}} {\pi}_{\theta}^2(v \mid x, y_{<t})$  denotes the sum of squared token probabilities. 
\end{proposition}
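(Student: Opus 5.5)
The claim has two parts. The first is the identity $\delta_t = \mathbf{e}_{y_t} - \boldsymbol{\pi}_t$ for the gradient of the log-likelihood with respect to the logits. The second is the closed form for $\|\delta_t\|^2$. I would prove them in that order.

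\textbf{The logit gradient.} Write the policy as a softmax, $\pi_\theta(v \mid x, y_{<t}) = \exp(z_{t,v}) / \sum_{u\in\mathcal{V}} \exp(z_{t,u})$. Then
\[
\log \pi_\theta(y_t \mid x, y_{<t}) = z_{t,y_t} - \log \sum_{u\in\mathcal{V}} \exp(z_{t,u}).
\]
Differentiating with respect to $z_{t,v}$ gives $\mathbb{1}[v=y_t] - \exp(z_{t,v})/\sum_u \exp(z_{t,u}) = \mathbb{1}[v=y_t] - \pi_\theta(v\mid x,y_{<t})$. Stacking these coordinates yields $\delta_t = \mathbf{e}_{y_t} - \boldsymbol{\pi}_t$.

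\textbf{The squared norm.} Expand $\|\delta_t\|^2 = (\mathbf{e}_{y_t}-\boldsymbol{\pi}_t)^\top(\mathbf{e}_{y_t}-\boldsymbol{\pi}_t)$ bilinearly, using symmetry of the inner product to merge the two cross terms. Three facts then finish the calculation:
\begin{itemize}
\item $\mathbf{e}_{y_t}^\top \mathbf{e}_{y_t} = 1$, since the one-hot vector has a single unit entry.
\item $\mathbf{e}_{y_t}^\top \boldsymbol{\pi}_t = \pi_\theta(y_t\mid x,y_{<t})$, since the inner product picks out the $y_t$ coordinate.
\item $\boldsymbol{\pi}_t^\top\boldsymbol{\pi}_t = \sum_{v\in\mathcal{V}} \pi_\theta^2(v\mid x,y_{<t}) = \|\boldsymbol{\pi}_t\|_2^2$, by definition.
\end{itemize}
Substituting gives $\hat w_t = 1 - 2\pi_\theta(y_t\mid x,y_{<t}) + \|\boldsymbol{\pi}_t\|_2^2$. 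Every quantity here is available from the forward pass, which justifies calling the proxy free.

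There is no real obstacle; this is a direct computation. The only point needing care is stating the softmax parameterization explicitly, since the identity for $\delta_t$ depends on it. The stronger claim that $\|\delta_t\|^2$ is proportional to $\|s_t\|^2$ is a modeling assumption deferred to \cref{sec:just_proxy} and is not part of this statement. As a sanity check, $\hat w_t = (1-\pi_\theta(y_t\mid x,y_{<t}))^2 + \sum_{v\neq y_t}\pi_\theta^2(v\mid x,y_{<t})$, so $\hat w_t \in [0,2]$ and it vanishes exactly when the sampled token has probability one.
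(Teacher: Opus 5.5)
Your proposal is correct and follows essentially the same route as the paper. The paper also expands $(\mathbf{e}_{y_t}-\boldsymbol{\pi}_t)^\top(\mathbf{e}_{y_t}-\boldsymbol{\pi}_t)$ bilinearly and uses the same three identities; the one difference is that it simply asserts the softmax logit gradient $\delta_t=\mathbf{e}_{y_t}-\boldsymbol{\pi}_t$, which you derive explicitly, and your sanity-check rewriting $\hat w_t=(1-\pi_\theta(y_t\mid x,y_{<t}))^2+\sum_{v\neq y_t}\pi_\theta^2(v\mid x,y_{<t})\in[0,2]$ is also correct.
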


This proxy requires zero additional backward-pass and only depends on the forward-pass output probabilities. 
It can effectively capture model confidence, as shown in~\cref{fig:prob_vs_norm}. The high-confidence tokens where $\pi_\theta(y_t \mid x, y_{<t}) \rightarrow 1$ have $\hat{w}_t \approx 0$ corresponding to low weights,
and vice versa.

\begin{figure}[htbp]
    \centering
    \includegraphics[width=0.95\linewidth]{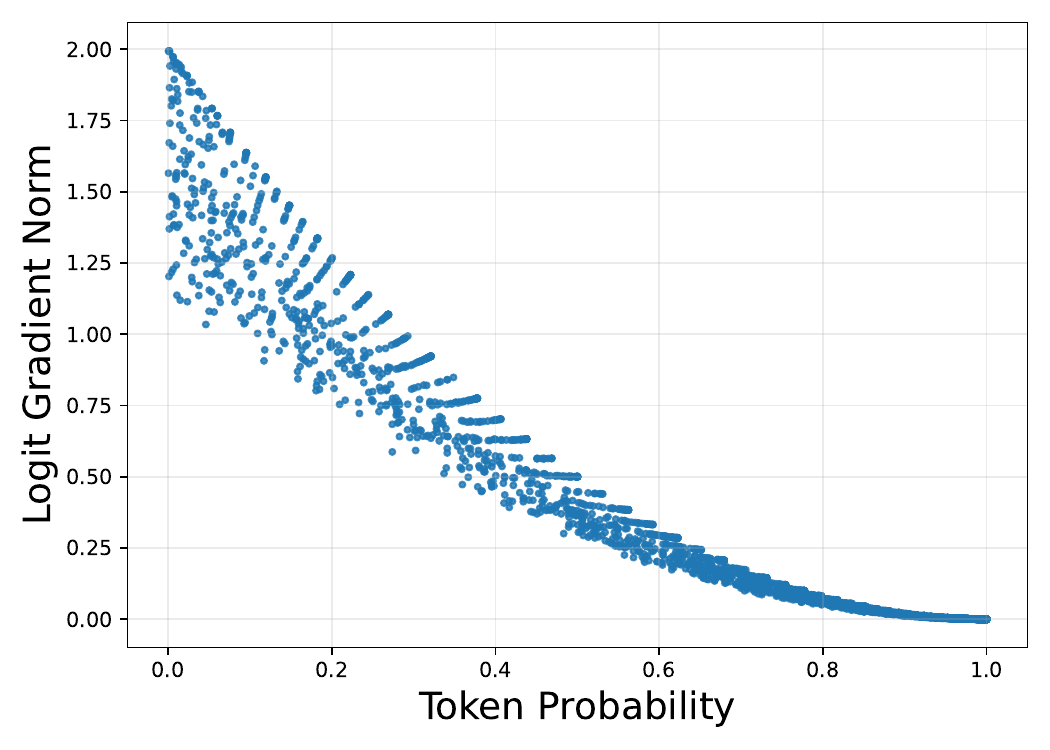}
    \caption{The relationship between Token Probability (Model Confidence) and Logit Gradient Norm (Uncertainty Measure).}
    \label{fig:prob_vs_norm}
\end{figure}

Leveraging the Logit-Gradient Proxy, we can efficiently compute the practical OTB with a group of $N$ responses using the approximated realized energy  $\hat{W}_t = \sum_{j=1}^t \hat{w}_j$:
\begin{equation}
    \hat{B}_t = \frac{\sum_{i=1}^N G_t^{(i)} \cdot \hat{W}_t^{(i)}}{\sum_{i=1}^N \hat{W}_t^{(i)}}.
\end{equation}

We demonstrate the workflow for calculating the OTB in~\cref{fig:otb_oveview}. Unlike standard group-mean baselines, which average over the padding \emph{EOS} token ineffectively, OTB handles this naturally by computing baselines only over valid tokens.

\begin{figure}[htbp]
    \vspace{-0.2cm}
    \centering
    \includegraphics[width=0.95\linewidth]{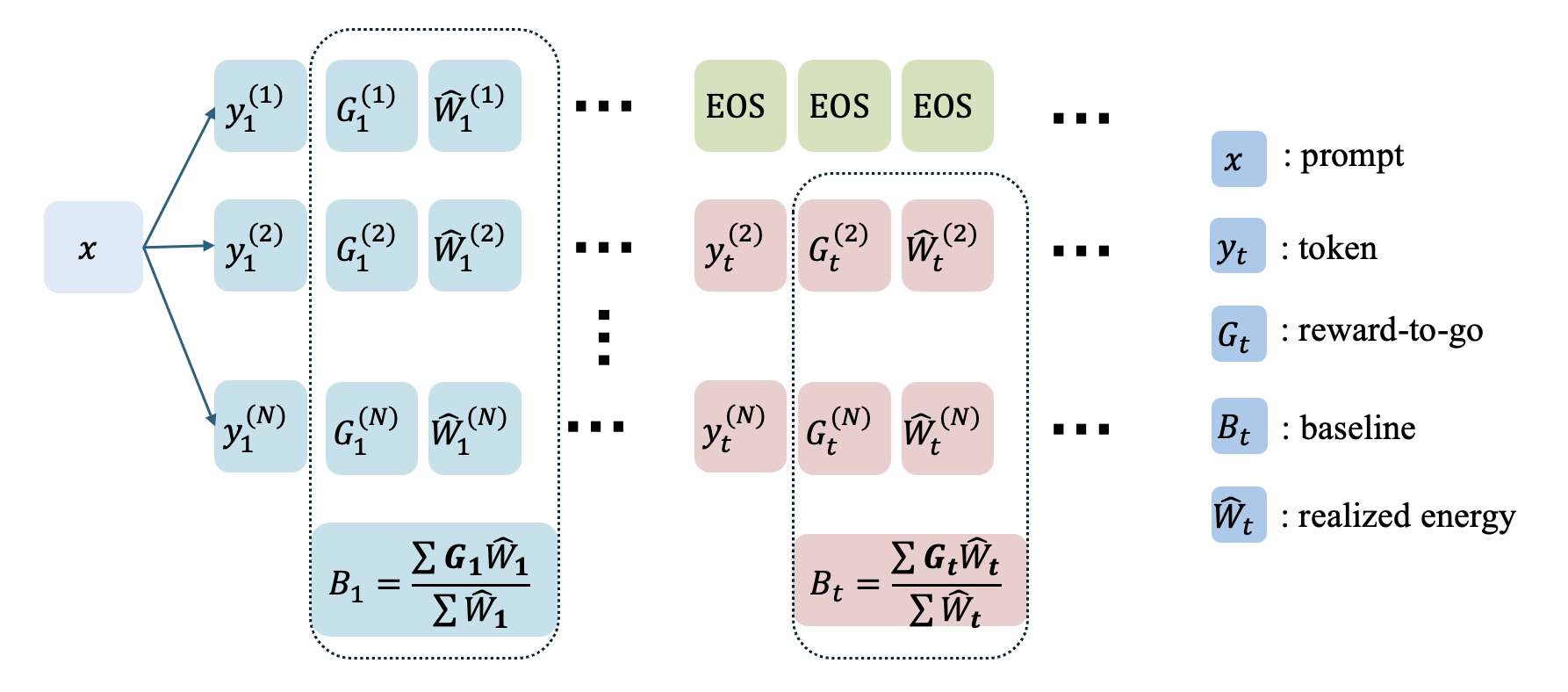}
    \caption{An overview of the Optimal Token Baseline.}
    \label{fig:otb_oveview}
    \vspace{-0.2cm}
\end{figure}

Thus, OTB can dynamically assign distinct advantages to each token in the sequence through a value-model-free approach, as shown in~\cref{fig:otb_ogb_adv}. This tracks the reward structure for different tokens, which cannot be achieved by Optimal Global Baseline or other group-mean baselines.

\begin{figure}[htbp]
    \vspace{-0.3cm}
    \centering
    \includegraphics[width=0.95\linewidth]{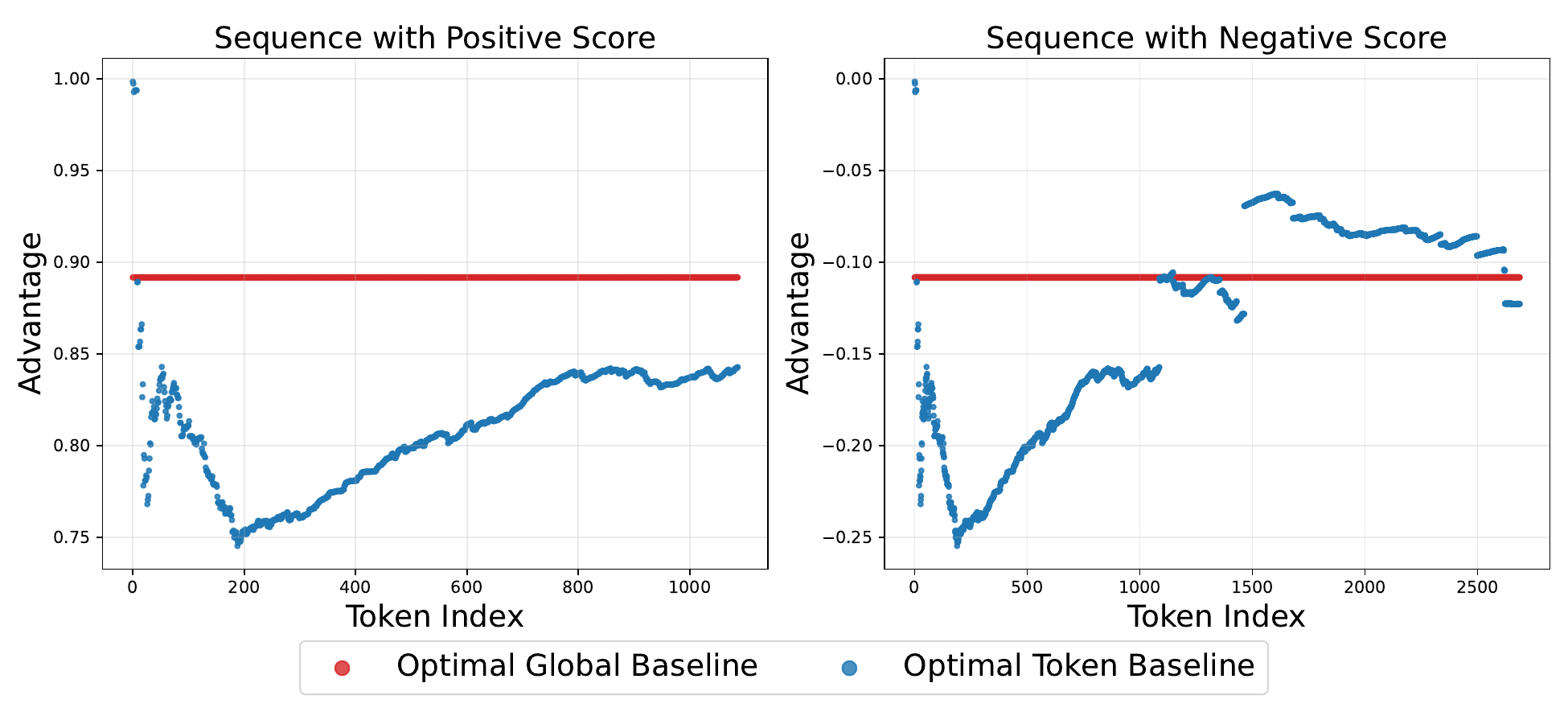}
    \caption{Advantage distribution comparison on positive and negative sequences from the same group. The Optimal Token Baseline dynamically adjusts the advantage for each token, in contrast to the OGB, which assigns an equivalent advantage across all tokens.}
    \label{fig:otb_ogb_adv}
    \vspace{-0.3cm}
\end{figure}

%% file: sec/theroy.tex
\section{Theoretical Analysis}
\label{sec:theory}

We now analyze the unbiasedness of the OTB, discuss its effectiveness in variance reduction, and present the justification of our Logit-Gradient Proxy.


\subsection{Unbiasedness of the OTB}
\label{sec:unbias_otb}

A fundamental requirement for any baseline is that it must not introduce bias into the policy gradient estimate. 
To ensure unbiasedness, the baseline at step $t$ must be independent of the sampled action $y_t$, meaning it can only depend on the trajectory history $y_{<t}$ and the prompt $x$. 
Our proposed OTB satisfies this condition because $B_t^*$ relies strictly on the prompt $x$, the current timestep $t$, and expectations taken over the policy distribution, as established in~\cref{theorem:otb}. 
Consequently, it remains completely independent of the specific realization of the token $y_t$ being evaluated in the current trajectory.


Mathematically, because $B_t^*$ is a scalar constant with respect to the token distribution at step $t$, then we have:

\begin{align}
& \mathbb{E}_{y_t}[\tilde{g}_c(y_t) \mid x, y_{<t} ] \\ \nonumber
=& \mathbb{E}_{y_t}[s_t (G_t - B^*_t) \mid x, y_{<t} ] \\ \nonumber
=& \mathbb{E}_{y_t} [ s_t \cdot G_t \mid x, y_{<t} ] -\mathbb{E}_{y_t} [ s_t \cdot B_t^* \mid x, y_{<t} ] \\ \nonumber
=& \nabla_{\theta} \mathcal{J}(\theta) - B_t^* \cdot \underbrace{\mathbb{E}_{y_t} [ s_t \mid x, y_{<t} ]}_{0}  = \nabla_{\theta} \mathcal{J}(\theta).
\end{align}

Similarly, the approximation $\hat{B}_t$ is computed by using group statistics. While group-dependent baselines (like the group mean in GRPO) introduce a negligible finite-sample bias of order $O(1/N)$, they remain asymptotically unbiased and are standard practice in RL. OTB adheres to this same standard, ensuring the optimization landscape remains valid.



Since $B_{t+1}^*$ is independent of the current action $y_t$, it technically remains a valid, unbiased candidate for the current step; however, it fails to minimize variance due to structural discrepancies. The primary limitation is a target mismatch, where $B_{t+1}^*$ approximates the future return $G_{t+1}$ rather than the target return $G_t$, leaving the immediate reward $r_t$ unnormalized and inflating variance in dense reward settings. Crucially, this is compounded by a causal weighting mismatch regarding the realized energy weights: while $B_t^*$ is specifically derived to minimize variance with respect to the cumulative weights $W_t$ observed up to step $t$, $B_{t+1}^*$ is optimized for the subsequent weights $W_{t+1}$, rendering it suboptimal for the current gradient estimate.

Consider a trajectory that is confident at step $t$ (low $W_t$) but becomes highly chaotic at step $t+1$ (high $W_{t+1}$). If we used $B_{t+1}^*$ at step $t$, we would be allowing the future instability at $t+1$ to distort the baseline at step $t$. This violates the causal structure of variance reduction. We must normalize the gradient at step $t$ based only on the accumulated uncertainty that has effectively ``caused'' the current variance. Using future weights would solve the wrong optimization problem, leading to a suboptimal baseline.



\subsection{Variance Reduction of the OTB}
\label{sec:var_reduction_otb}

In \cref{sec:otb}, we reformulate the gradient variance $\mathbb{V}[\tilde{g}_{c}(\tau)]$ minimization problem as minimizing the expected squared norm of the gradient estimator $\mathbb{E}_{\tau}[\| \tilde{g}_c(\tau) \|^2]$, and formalize the objective $\mathcal{J}(B_t)$ in Eq.~\eqref{eq:obj_otb} to identify the optimal $B_t$. To derive this optimal value, we take the derivative of $\mathcal{J}(B)$ with respect to $B_t$ and set the resulting expression to zero:

\begin{equation}
\label{eq:der_otb}
\frac{d\mathcal{J}(B_t)}{dB_t} = - 2\E_{y_{\leq t} \sim \pi_{\theta}(\cdot \mid x)}\left[ W_t (G_t - B_t) \right] = 0.
\end{equation}

Solving Eq.~\eqref{eq:der_otb} yields the optimal baseline $B_t^*$, which matches exactly the result presented in \cref{theorem:otb}. This confirms that $B_t^*$ is the exact solution to the gradient variance minimization problem. It effectively ``dampens'' the noise from high-uncertainty trajectories by centering the gradient estimate more aggressively on them.


We then analyze the superiority of the Optimal Token Baseline over OGB by comparing the objective value $\mathcal{J}$ of the static $B^*_{global}$ (omitting $x$ in $B^*_{global}(x)$ for brevity) against the dynamic OTB process $\mathbf{B}^* = \{B_t^*\}_{t=1}^T$:

\begin{align}
     \mathcal{J}(B_{global}^*) &= \sum_t \mathbb{E} \left[ W_t(G_t - B_{global}^*)^2 \right] \\ \nonumber
    &= \sum_t \mathbb{E} \left[ W_t \left( (G_t - B_t^*) + (B_t^* - B_{global}^*) \right)^2 \right] \\ \nonumber
    &= \mathcal{J}(\mathbf{B}^*) + \text{Term A} + \text{Term B}.
\end{align}

The $\text{Term A} = 2 \sum_t \mathbb{E} [ W_t (G_t - B_t^*)(B_t^* - B_{global}^*) ]$ is the cross-term, which vanishes due because $\mathbb{E} [ W_t(G_t - B_t^*) ] = 0$ by the definition of $B_t^*$ from Eq.~\eqref{eq:der_otb}. 

The $\text{Term B} = \sum_t \mathbb{E} [ W_t(B_t^* - B_{global}^*)^2 ]$ orresponds to the variance gap. This gap is strictly positive as the $B_t^*$ evolves dynamically over time  as show in~\cref{fig:otb_ogb_adv}. In long-horizon tasks, $B_t^*$ changes drastically as the agent generates key reasoning steps, which a static baseline $B^*_{\text{global}}$ is inherently unable to track. Therefore, the dynamic OTB consistently achieves lower gradient variance than the static OGB.


\subsection{Justification for Logit-Gradient Proxy}
\label{sec:just_proxy}

We justify that the square norm of logit gradient $\|\delta_t\|^2$ is a valid substitute for the square norm of the true parameter gradient $\|\nabla_\theta\|^2$ by showing they are mathematically proportional for modern Transformer architectures. 


Consider the final linear layer (Unembedding Head) $z_t = W h_t$, where $W$ is the weight matrix and $h_t$ is the hidden state. The gradient of the objective $J$ with respect to $W$ is given by the outer product of the logit error $\delta_t$ and the input $h_t$: $ \nabla_W J = \delta_t h_t^\top$.
The Frobenius norm of this rank-1 matrix decomposes into the product of the vector norms:

\begin{equation}
\| \nabla_W J \|_F = \sqrt{\text{Tr}(\delta_t h_t^\top h_t \delta_t^\top)} = \| \delta_t \|_2 \cdot \| h_t \|_2.
\end{equation}

In modern LLMs~\cite{grattafiori2024llama,bai2023qwen}, the hidden state $h_t$ is the output of an RMSNorm layer. RMSNorm normalizes the input vector $x$ such that its root mean square is constant:

\begin{equation}
h_t = \text{RMSNorm}(x) \implies \|h_t\|_2 \approx \sqrt{d_{model}}.
\end{equation}

Since $\|h_t\|_2$ is approximately constant across all time steps, the gradient norm of the last layer is strictly proportional to the logit gradient norm: $\| \nabla_W J \|_F \propto \| \delta_t \|_2$. To extend this justification to the full parameter gradient $s_t$, we adopt the standard assumption that the backbone network propagates gradients isotropically. Under this assumption, the proportionality observed in the final layer is preserved throughout the backward pass, implying $\| s_t \|^2 \propto \| \delta_t \|^2$.

Since the Optimal Token Baseline is a weighted centroid (a ratio), any constant scalar factor cancels out. Thus, using the logit gradient proxy is mathematically equivalent to using the true gradient norm of the final layer.

%% file: sec/exp.tex
\section{Empirical Studies}
\label{sec:exp}

We conduct comprehensive experiments to evaluate Optimal Token Baseline across two distinct paradigms for mathematical tasks with rule-based reward signals: Single-Turn Reasoning and Multi-Turn Tool-Integrated Reasoning (TIR). For the TIR setting, we adhere to the setup from SimpleTIR~\cite{xue2025simpletir}, requiring LLM generates Python code throughout a maximum of 5 turns. For both paradigms, we adopt full on-policy training on the unaligned Qwen series base models, using the deduplicated DAPO-MATH-17k as the training dataset. We remain hyperparameters remain consistent across both paradigms, including the batch size of 128, the maximum response length of 8192 and the group size of 16. For evaluation, we set the Top-p to 0.95 and report avg@32 scores to ensure statistically robust results. Further detailed settings are provided in Appendix~\ref{app:exp_detailed_set}.


\subsection{Comparative Results on Performance Metrics}

We compare Optimal Token Baseline against several competitive methods, including GRPO~\cite{shao2024deepseekmath}, RLOO~\cite{ahmadian2024back}, OPO~\cite{hao2025policy}, and OGB. The OPO is also derived from optimal baseline theory, yet it uses sequence length as a proxy to estimate gradient norm for trajectory-level variance reduction. We will discuss the limitations of this approximation strategy in~\cref{sec:ablation_proxy}. The implementation details of OGB are provided in Appendix~\ref{app:ogb_imple}. For the TIR setting, we incorporate SimpleTIR~\cite{xue2025simpletir}—a strong method that filters invalid trajectories for training stability. 

\begin{table}[htbp]
\centering
\caption{Comparative performance results on mathematical reasoning benchmarks. All scores represent the peak performance achieved by each method during the full training process.}
\resizebox{0.48\textwidth}{!}{  
    \begin{tabular}{lcccc}
    \toprule[2pt]
    Method       & AIME25 & AIME24 & AMC23 & MATH500 \\
    \hline
    \multicolumn{5}{c}{\textit{Single-Turn Reasoning on Qwen3-8b-Base}}          \\
    \hline
    GRPO         & 25.31          & 31.35          & 80.86          & 90.50          \\
    RLOO         & 24.38          & 28.96          & 74.53          & 90.56          \\
    OPO          & 27.29          & 35.31          & 84.53          & 93.31          \\
    OGB          & 30.10          & 33.13          & 80.55          & 90.75          \\
    
    \textbf{OTB} & \textbf{30.31} & \textbf{37.29} & \textbf{85.08} & \textbf{93.43} \\
    \hline
    \multicolumn{5}{c}{\textit{TIR on Qwen2.5-7B}}                               \\
    \hline
    GRPO         & 18.54          & 27.60          & 59.45          & 76.88          \\
    RLOO         & 20.10          & 25.00          & 62.42          & 76.06          \\
    OPO          & 20.83          & 29.90          & 64.84          & 76.94          \\
    OGB          & 21.15          & 30.63          & 62.50          & 75.69          \\
    SimpleTIR    & 26.67          & 37.91          & 71.25          & 82.25          \\
    \textbf{OTB} & \textbf{28.13} & \textbf{41.46} & \textbf{79.45} & \textbf{84.69} \\
    \bottomrule[2pt]
    \end{tabular}
}
\label{tab:score_res}
\end{table}

As shown in \cref{tab:score_res}, our OTB achieves the best performance consistently across all settings. These findings indicate that training stability is the key factor to unlocking the full potential of LLM-RL. We further provide training curves for all methods in Appendix~\ref{app:detailed_curve}, which demonstrate that OTB achieves notable stable training.

\subsection{Eliminating Training Collapse}
\label{sec:eliminate_collapse}

We have demonstrated that Optimal Token Baseline can effectively reduce gradient variance, leading to stable gradient norms and training dynamics, as shown in \cref{fig:single_first,fig:single_large_variance}. A secondary benefit of this stability is the mitigation of ``Training-Rollout Mismatch''—a prevalent challenge in LLM-RL characterized by the drift between the policy being trained and the policy collecting rollout data. This mismatch introduces gradient bias, thereby degrading policy optimization and causing training collapse. As illustrated in \cref{fig:mismatch_response}, OTB maintains minimal KL divergence between the training and rollout policies and preserves a steady response length, thereby extending the stable training window. Additionally, we provide supplementary metrics (e.g., policy entropy) in the Appendix~\ref{app:detailed_stability} to further validate the stability of OTB.

\begin{figure}[htbp]
    \centering
    \includegraphics[width=0.95\linewidth]{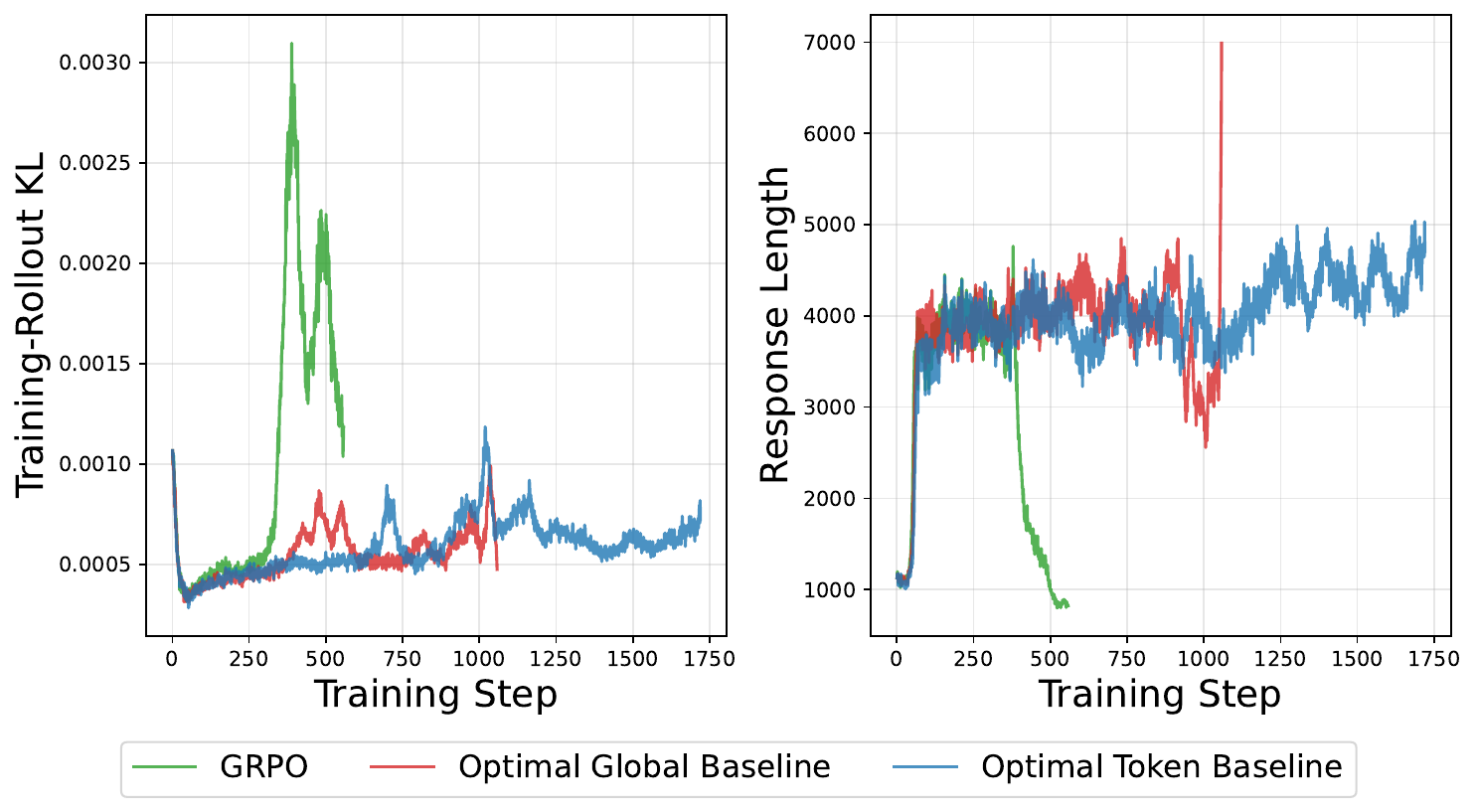}
    \caption{Training-Rollout KL Divergence and Response Length under Single-Turn Reasoning.}
    \label{fig:mismatch_response}
\end{figure}



Other widely adopted strategies for enhancing training stability are \textit{Truncated Importance Sampling} (TIS)~\cite{yao2025offpolicy} and \textit{Masked Importance Sampling} (MIS) \cite{liu-li-2025-rl-collapse}. These methods are designed to mitigate the mismatch bias induced by the discrepancies between the training policy and the behavior policy. 
To evaluate the efficacy of our Optimal Token Baseline, we compare it against these established techniques. We implement token-level TIS and MIS within the GRPO. As illustrated in \cref{fig:vs_IS}, our empirical findings suggest that while TIS and MIS alleviate instability in Single-Turn Reasoning tasks, they fundamentally struggle in TIR scenarios. In contrast, our OTB achieves consistent stability across all scenarios.

\begin{figure}[htbp]
    \centering
    \includegraphics[width=0.95\linewidth]{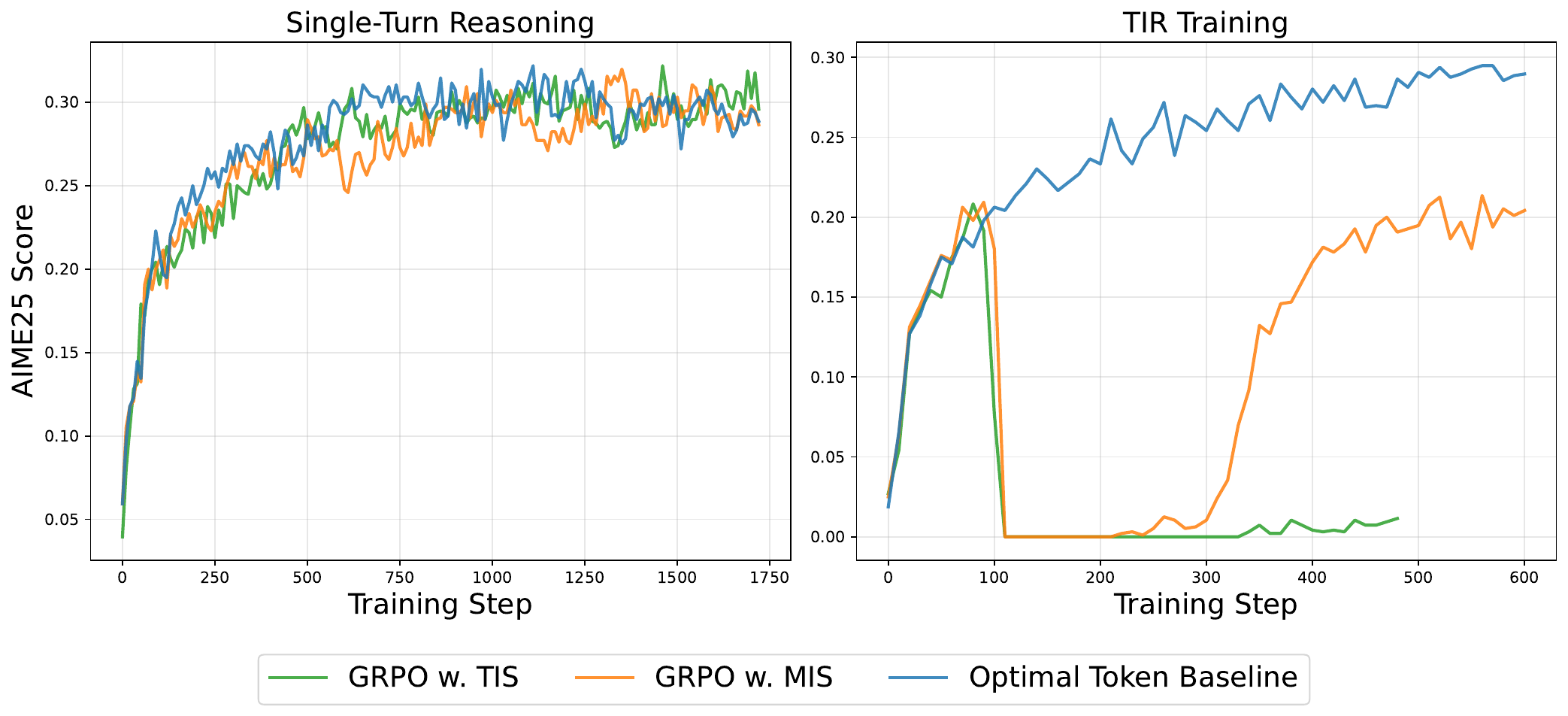}
    \caption{The Optimal Token Baseline demonstrates superior training stability compared to TIS and MIS.}
    \label{fig:vs_IS}
\end{figure}

This finding suggests that instability in LLM-RL is not driven solely by policy divergence, but also by fundamental optimization challenges related to variance. By explicitly reducing gradient variance, OTB stabilizes parameter updates and drives the model toward a robust equilibrium. This effectively minimizes policies mismatch without the information loss associated with masking, a conclusion further corroborated by the empirical findings in~\cref{fig:mismatch_response}.


\subsection{Breaking the Sample Efficiency Barrier}
\label{sec:sample_efficiency}

Sample efficiency is the primary bottleneck of on-policy RL. Standard methods rely on large group size to average out noise and will suffer from early training collapse with a smaller group size. Because OTB assigns credit precisely by downweighting ``noisy'' tokens via the realized energy, it extracts significantly more signal per sample. As shown in~\cref{fig:single_turn_rollout_n}, OTB maintains training stability even under high-gradient-variance conditions (small group size $N$), proving that its variance reduction is sufficient to prevent training callospe. Specifically, OTB achieves comparable performance with a small group size of $N=4$ as with a substantially larger group size of $N=32$. This breakthrough enables stable training while \textbf{reducing total token consumption by 66.03\% for Single-Turn Reasoning and 68.32\% for TIR training}.


\begin{figure}[htbp]
    \centering
    \subfigure[Results under Single-Turn Reasoning.]{
    \includegraphics[width=0.95\linewidth]{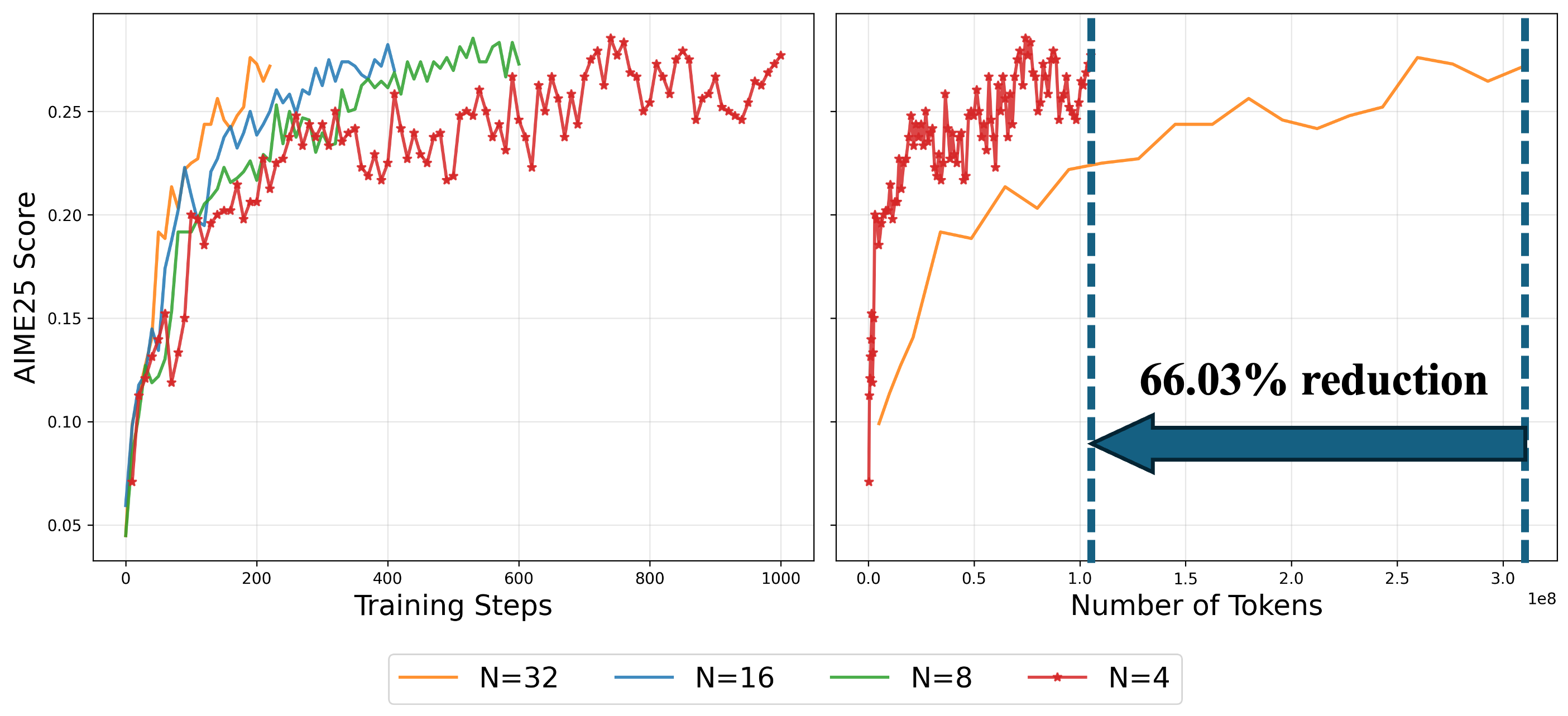}}

    \subfigure[Results under TIR.]{
    \includegraphics[width=0.95\linewidth]{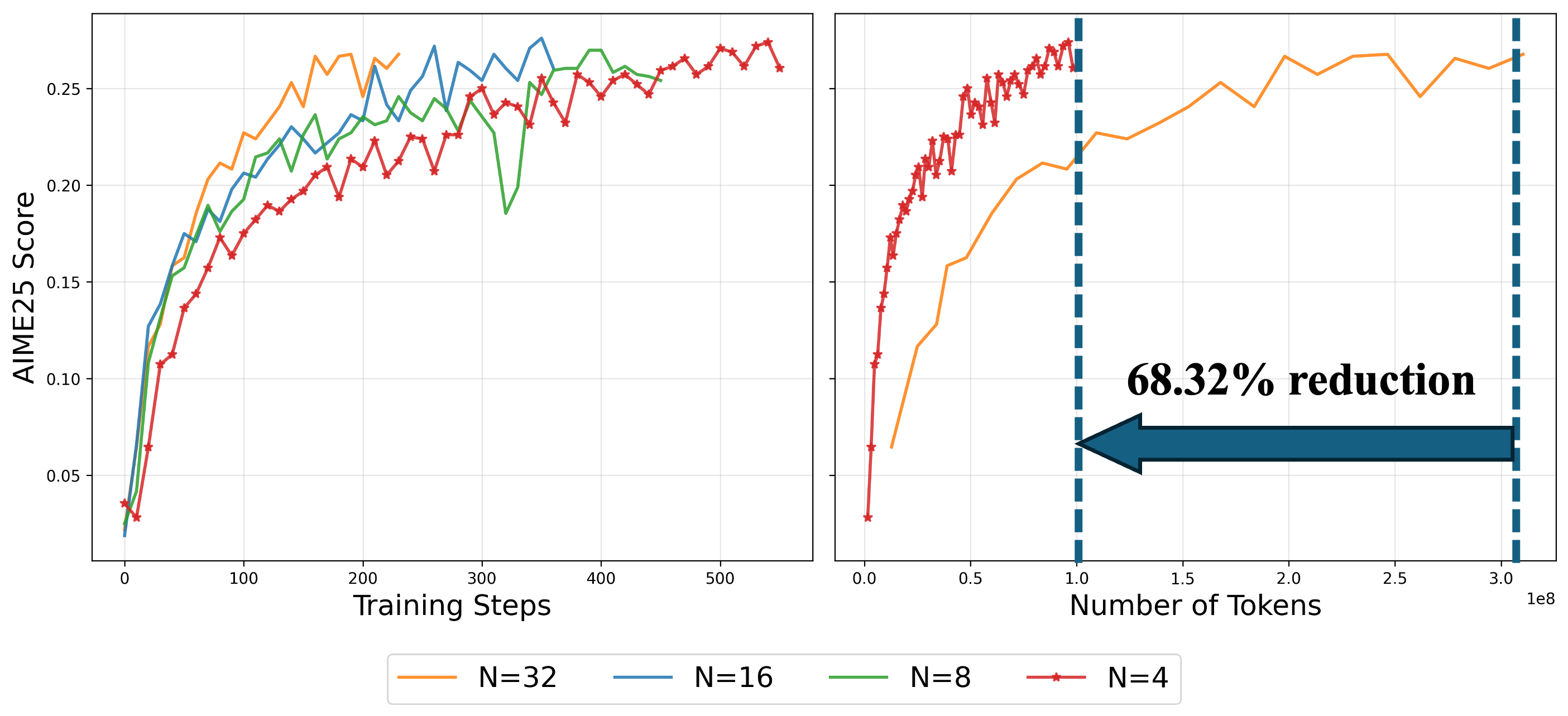}}
    \caption{Ablation study of group size $N$. Evaluated under the training step (Left) and total token budget (Right), OTB maintains high performance even with a minimal group size of $N=4$.}
    \label{fig:single_turn_rollout_n}
    \vspace{-0.2cm}
\end{figure}

\subsection{The Logit-Gradient Proxy Matters}
\label{sec:ablation_proxy}

Another approach based on optimal baseline theory, OPO \cite{hao2025policy}, operates on the assumption that longer sequences inherently entail higher variance (represented as larger energy) and should therefore be downweighted.
\begin{figure}[htbp]
    \centering
    \subfigure[]{
    \includegraphics[width=0.48\linewidth]{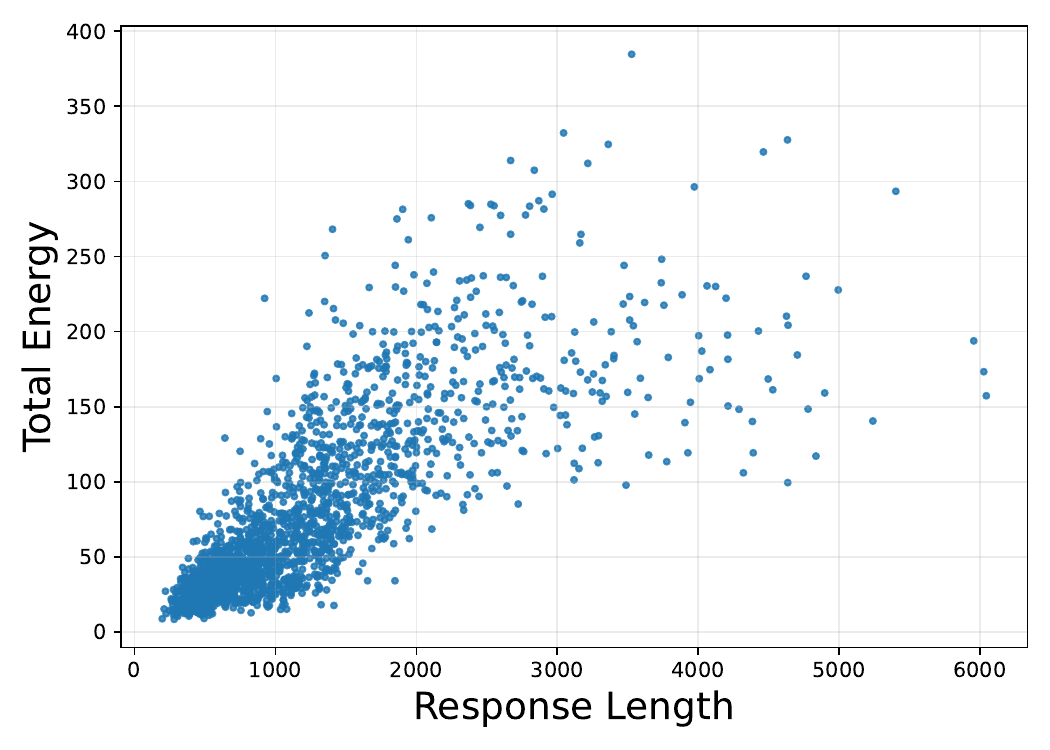}}
    \subfigure[]{
    \includegraphics[width=0.48\linewidth]{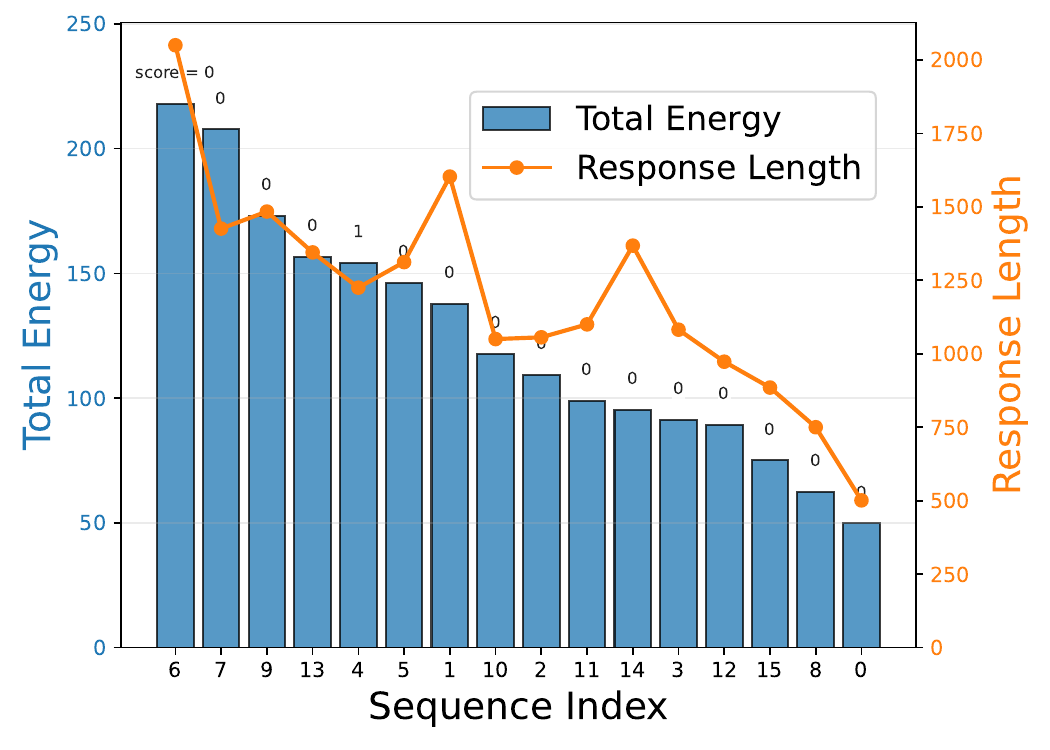}}
    \caption{Total energy and response length per sequence under (a) global batch distribution and (b) local intra-group distribution.}
    \label{fig:length_energy}
    \vspace{-0.2cm}
\end{figure}

However, our empirical analysis in~\cref{fig:length_energy} refutes this simplification. From a batch-wide perspective, we observe that sequences of identical response length exhibit widely varying levels of total energy. More critically, at the intra-group level, longer sequences also possess lower energy. This indicates that length-based approximations fail to capture the actual gradient variance inherent in each sequence.

Additionally, this \textbf{Length Proxy} fails to address token heterogeneity. Since the sequence length is identical for all sequences at any given generation step $t$, this metric assigns uniform importance to every token. This prevents the assignment of distinct per-token advantages, except in cases where sequences terminate early as shown in~\cref{fig:length_proxy}. Therefore, the method is inherently unable to capture fine-grained, token-level energy differences.

\begin{figure}[htbp]
    \centering
    \includegraphics[width=0.95\linewidth]{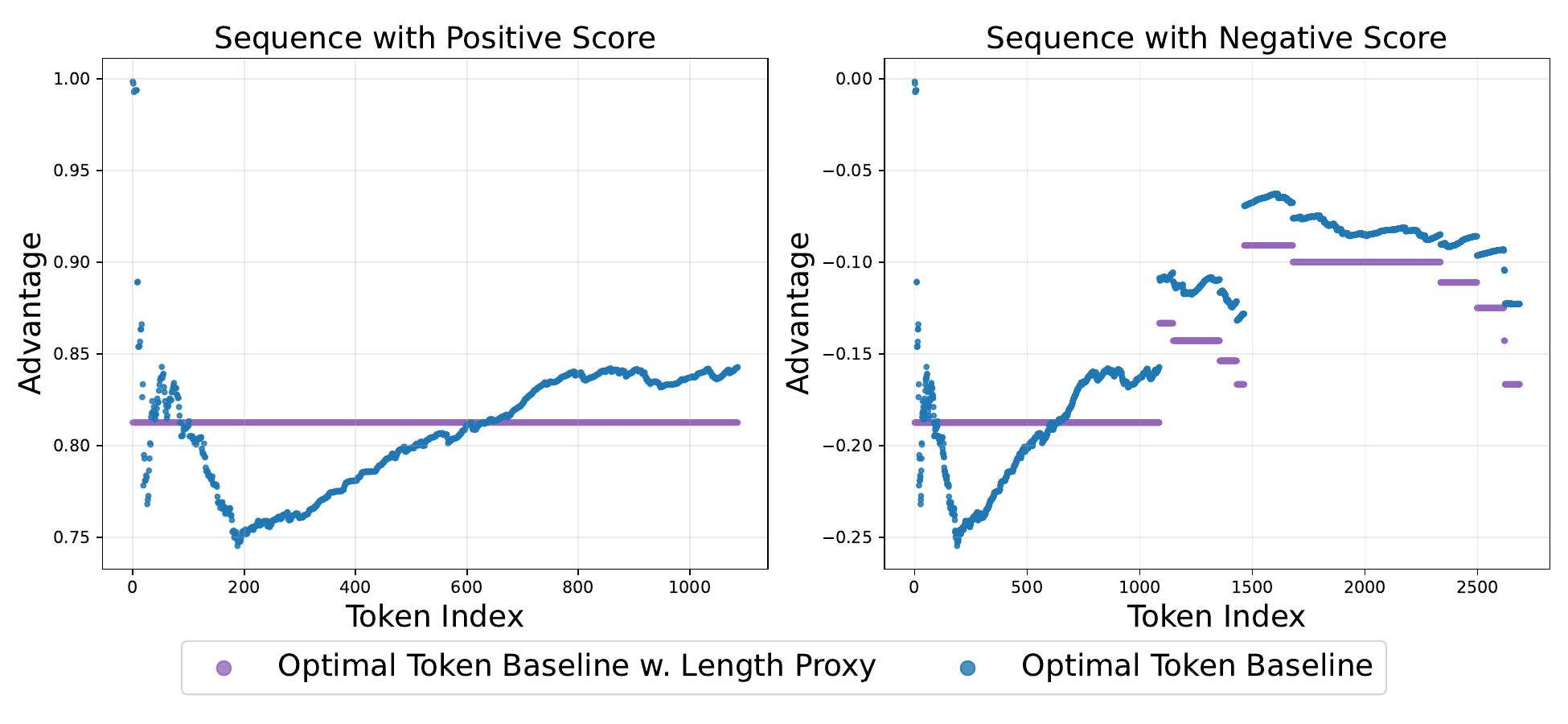}
    \caption{Advantage distribution comparison on positive and negative sequences from the same group. The Length Proxy cannot dynamically assign distinct per-token advantages unless certain sequences terminate generation with padded tokens.}
    \label{fig:length_proxy}
    \vspace{-0.2cm}
\end{figure}

We further evaluate the effectiveness of our Logit-Gradient Proxy by comparing it against the Length Proxy within both the Optimal Global Baseline and Optimal Token Baseline. As shown in~\cref{fig:score_length_proxy}, relying on the Length Proxy causes both OGB and OTB to suffer from early training collapse. While integrating OGB with our Logit-Gradient Proxy extends the stable training duration, it eventually succumbs to collapse due to its inability to account for token heterogeneity. In contrast, OTB equipped with the Logit-Gradient Proxy effectively tracks the realized energy of each token generation step, ensuring sustained training stability.

\begin{figure}[htbp]
    \centering
    \includegraphics[width=0.95\linewidth]{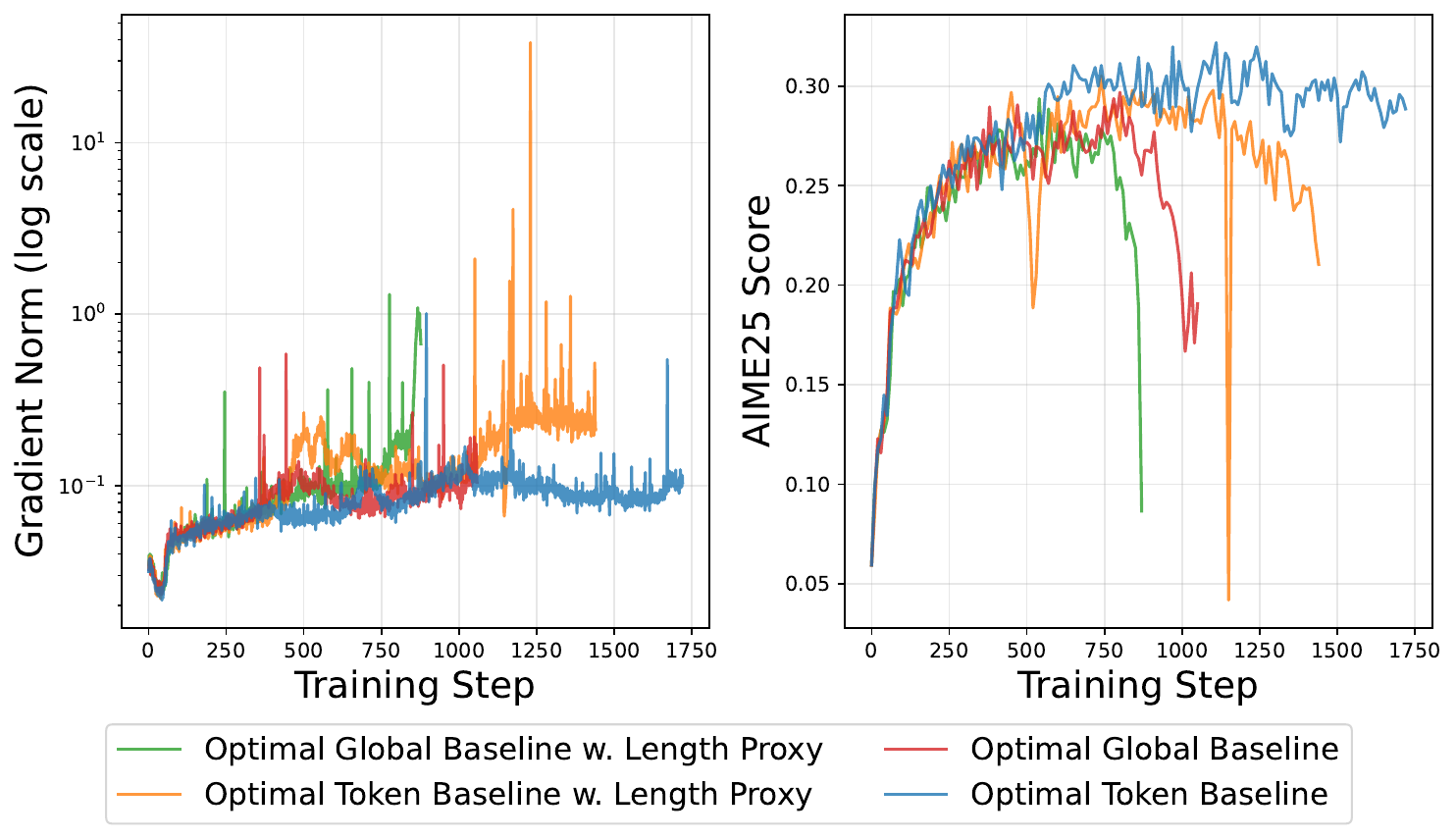}
    \caption{These results further confirm that Logit-Gradient Proxy is critical for training stability. }
    \label{fig:score_length_proxy}
    \vspace{-0.2cm}
\end{figure}

\subsection{Robustness under Longer Contexts}

We stress-test OTB in highly challenging scenarios by scaling the response length (from $8k$ to $16k$ tokens) and increasing the interaction turns (from 5 to 10 turns) within the TIR setting. As shown in \cref{fig:tir_longer}, OTB maintains consistent training stability across these demanding conditions. In the TIR setting, response lengths often exhibit substantial variance within a single group, rendering standard group mean baselines ineffective. OTB naturally accommodates this variability by dynamically computing baselines exclusively over valid tokens as shown in~\cref{fig:otb_oveview}.

\begin{figure}[htbp]
    \centering
    \includegraphics[width=0.95\linewidth]{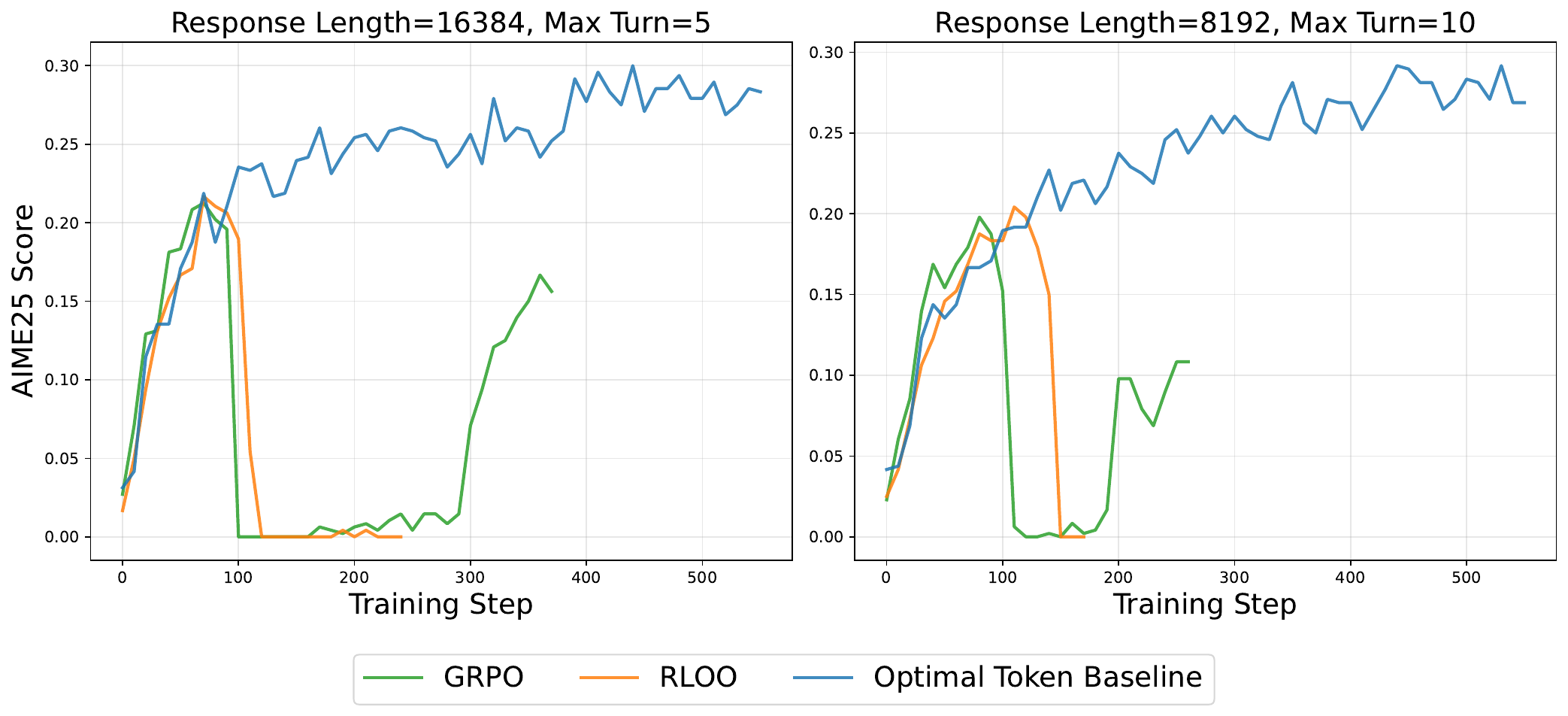}
    \caption{OTB consistently demonstrates stability under longer response length and larger maximum turn.}
    \label{fig:tir_longer}
    \vspace{-0.2cm}
\end{figure}

We additionally evaluate OTB on a large-scale model (14B) in Appendix~\ref{app:res_larger_model}. Furthermore, we conduct a comparative analysis of an alternative variant of OTB that employs approximated isolated energy $\hat{w}_t$ to weight the reward-to-go in Appendix~\ref{app:otb_isolated_energy}. Experimental results demonstrate that our OTB consistently yields superior training stability and attains higher scores across all settings.






%% file: sec/conclusion.tex
\section{Conclusion and Discussion}

We propose Optimal Token Baseline, which shifts from existing heuristic tricks to rigorous derivations in variance reduction for LLM-RL. We identify that not all tokens are created equal, while their contribution to the update must be weighted by their specific uncertainty, as quantified by the realized energy. This enables precise reward structure tracking by downweighting noisy tokens through a value-model-free approach. OTB achieves extreme sample efficiency: high performance is attained even with group sizes as small as $N=4$. Crucially, we bridge the gap between theory and practice with the Logit-Gradient Proxy. This allows practitioners to estimate the true parameter gradient norm using only the logits from the forward pass, offering a \textit{free lunch} for stability estimation.

In future work, we will explore the stability of OTB on more tasks such as search engines. 
As reasoning chains grow longer and agents become more autonomous, variance control becomes the defining challenge. We believe OTB provides the foundational stability required to scale LLM reasoning and agent to the next level of complexity.

%% file: appendix/ogb.tex
\section{Detailed Analysis on The Optimal Gloabl Baseline}
\label{app:ogb}

In this section, we present a detailed derivation of the Optimal Global Baseline, demonstrate its trajectory-level variance reduction property, and then introduce our partial implementation.

\subsection{Trajectory-level Variance Reduction}
\label{app:ogb_var_reduction}

In policy optimization, we often use a baseline $B(x)$—a function of the prompt $x$—to reduce the variance of the gradient estimate. The gradient estimator is defined as:

\begin{equation}
\tilde{g}(\tau) =(R(\tau) - B(x)) \cdot  S(\tau).
\end{equation}

This estimator remains unbiased for any choice of $B(x)$ because the expectation of the baseline term is zero:

\begin{equation}
\mathbb{E}_{\tau}[B(x) \cdot  S(\tau)] = B(x)\cdot \mathbb{E}_{\tau}  [S(\tau)]= 0.
\end{equation}

Thus, $\mathbb{E}_{\tau}[\tilde{g}(\tau)] = \mathbb{E}_{\tau}[R(\tau) \cdot S(\tau)] =  \nabla_{\theta} \mathcal{J}(\theta)$, confirming that the estimator remains centered on the true gradient for any choice of $B(x)$.

The goal of Optimal Global Baseline is to choose a baseline $B(x)$ to minimize the variance of this gradient estimator. The variance of the random vector $\tilde{g}(\tau)$ is the expected squared distance from its mean:

\begin{equation}
\text{Var}(\tilde{g}(\tau)) = \mathbb{E}_{\tau}[\| \tilde{g}(\tau) - \nabla_{\theta} \mathcal{J}(\theta)\|^2.
\end{equation}

By expanding this expression, we get $\text{Var}(\tilde{g}(\tau)) = \mathbb{E}_{\tau}[\| \tilde{g}(\tau) \|^2] - \| \nabla_{\theta} \mathcal{J}(\theta) \|^2$. Since the true gradient $\nabla_{\theta} \mathcal{J}(\theta)$  is independent of the baseline, minimizing variance is mathematically equivalent to minimizing the expected squared norm of the estimator. This leads to the following optimization objective to obtain the variance-minimizing:

\begin{equation}
\min_{B} \mathcal{J}(B) = \mathbb{E}_{\tau} [\|\tilde{g}(\tau)\|^2] = \mathbb{E}_{\tau}[\|(R(\tau) - B) \cdot  S(\tau)\|^2].
\end{equation}

Expanding $\mathcal{J}(B)$  as a quadratic function of $B$ (using $B$ for $B(x)$ for brevity):

\begin{equation}
\begin{aligned}
\mathcal{J}(B) &= \mathbb{E}_{\tau}[(R(\tau) - B)^2 \cdot \| S(\tau)\|^2] \\
&= \mathbb{E}_{\tau}[R(\tau)^2 \cdot \| S(\tau)\|^2] - 2B\mathbb{E}_{\tau}[ R(\tau) \| S(\tau)\|^2] + B^2\mathbb{E}_{\tau}[\| S(\tau)\|^2].
\end{aligned}
\end{equation}

To find the minimum, we differentiate with respect to $B$ and set the result to zero:

\begin{equation}
\frac{d\mathcal{J}(B)}{dB} = - 2\mathbb{E}_{\tau}[ R(\tau) \| S(\tau)\|^2] + 2B\mathbb{E}_{\tau}[\| S(\tau)\|^2] = 0.
\end{equation}

Solving for $B$ yields the Optimal Global Baseline:

\begin{equation}
 B^*(x) = \frac{\mathbb{E}_{\tau} \left[ R(\tau) \cdot \|S(\tau)\|^2 \right]}{\mathbb{E}_{\tau} \left[  \|S(\tau)\|^2 \right]}. 
\end{equation}

The positivity of $\mathbb{E}_{\tau}[\| S(\tau) \|^2]$ confirms $\mathcal{J}(B^*(x))$ is a minimum. This result proves that to achieve minimum variance, the baseline must be a weighted average of rewards, where the weights are determined by the total energy $\|S(\tau)\|^2$.

\subsection{Practical Implementation}
\label{app:ogb_imple}

We use the Realized Energy (Definition~\ref{def:realized_energy}) to define the total energy of a trajectory as $ W({\tau}) $. By applying our Logit-Gradient Proxy, we can efficiently approximate the total energy as $ \hat{W}(\tau) = \sum_{j=1}^T \hat{w}_j $. The Optimal Global Baseline is then computed efficiently as:
\begin{equation}
    \hat{B}_{\text{global}}(x) = \frac{\sum_{i=1}^N R^{(i)}(\tau) \cdot \hat{W}^{(i)}(\tau)}{\sum_{i=1}^N \hat{W}^{(i)}(\tau)}.
\end{equation}

%% file: appendix/causal_pg.tex
\section{Derivation of the Causal Policy Gradient}
\label{app:causal_pg}

We provide a detailed derivation of the causal policy gradient estimator~\cite{williams1992simple}. We demonstrate how the transition from a total trajectory reward $R(\tau)$ to a reward-to-go $G_t$ formulation reduces variance without introducing bias.


The objective in LLM alignment is to maximize the expected return $\mathcal{J}(\theta) = \mathbb{E}_{\tau \sim \pi_\theta} [R(\tau)]$. The standard REINFORCE gradient is given by:

\begin{equation}
    \nabla_\theta \mathcal{J}(\theta) = \mathbb{E}_{\tau \sim \pi_\theta} \left[ R(\tau) \sum_{t=1}^T \nabla_\theta \log \pi_\theta(y_t \mid x, y_{<t}) \right].
\end{equation}

While unbiased, this estimator suffers from high variance because the total score function $\sum_{t=1}^T s_t=\sum_{t=1}^T \nabla_\theta \log \pi_\theta(y_t \mid x, y_{<t})$ acts as a single scalar multiplier for the total reward $R(\tau)$, ignoring the temporal structure of the generation.


In the context of autoregressive generation, a token $y_t$ at time $t$ can only affect subsequent rewards $\{r_t, r_{t+1}, \ldots, r_T\}$. It cannot physically influence rewards received at $k < t$. Mathematically, we can leverage the property that the expectation of the score function is zero: $\mathbb{E}_{y_t \sim \pi_\theta} [s_t] = 0$. Using the law of iterated expectations, we can show that:

\begin{equation}
\mathbb{E}_{\tau} [s_t r_k] = 0 \quad \text{for} \quad k < t.
\end{equation}

By removing these \textbf{acausal} terms from the summation, we arrive at the causal policy gradient estimator:

\begin{equation}
\nabla_\theta \mathcal{J}(\theta) = \mathbb{E}_{\tau} \left[ \sum_{t=1}^T s_t \left( \sum_{k=t}^T r_k \right) \right] = \mathbb{E}_{\tau} \left[ \sum_{t=1}^T s_t G_t \right].
\end{equation}

The variance reduction in the causal policy gradient stems from the removal of acausal terms that contribute noise without providing useful signal. In the standard REINFORCE estimator, every score function $s_t$ is scaled by the total reward $R(\tau)$, meaning a token generated at the end of a sequence is erroneously scaled by rewards that occurred before it was even sampled. By substituting the total reward $R(\tau)$ with the reward-to-go $G_t = \sum_{k=t}^T r_k$, we effectively decouple each action from past rewards, which are mathematically equivalent to constants with respect to the current action. This is particularly critical for long-horizon LLM tasks; while the global estimator scales every step by a sum of $T$ random variables, the causal estimator scales the $t$-th step by only $T-t+1$ variables.

Mathematically, if we consider rewards $r_t$ as random variables with variance $\sigma^2_r$, the variance of the reward term in REINFORCE remains at approximately $T\sigma^2_r$ for every token in the trajectory. In contrast, the causal estimator's reward variance decreases linearly as $t$ increases, effectively halving the total reward-related noise across the trajectory on average. 


%% file: appendix/otb.tex
\section{The Derivation of Optimal Token Baseline}
\label{app:dev_otb}

In this section, we derive the Optimal Token Baseline by minimizing the variance of the causal policy gradient estimator. We define the time-dependent baseline sequence as $\{B_t\}_{t=1}^T$, yielding the causal gradient estimator $\tilde{g}_{c}(\tau) = \sum_{t=1}^T s_t (G_t - B_t)$.

\subsection{Variance Minimization Objective}

Our goal is to minimize the gradient variance:
\begin{align}
\mathbb{V}[\tilde{g}_{c}(\tau)] &\triangleq \mathbb{E}_{\tau}\left[ \| \tilde{g}_c(\tau) - \mathbb{E}_{\tau}[\tilde{g}_c(\tau)] \|^2 \right] \nonumber \\
&= \mathbb{E}_{\tau}\left[ \| \tilde{g}_c(\tau) \|^2 \right] - \| \mathbb{E}_{\tau}[\tilde{g}_c(\tau)] \|^2 \nonumber \\
&= \mathbb{E}_{\tau}\left[ \| \tilde{g}_c(\tau) \|^2 \right] - \| \nabla_{\theta} \mathcal{J}(\theta) \|^2.
\end{align}
Since the true policy gradient $\nabla_{\theta} \mathcal{J}(\theta)$ is independent of the baseline $B_t$, minimizing the variance is equivalent to minimizing the expected squared norm $\mathbb{E}_{\tau}[\| \tilde{g}_c(\tau) \|^2]$.

\subsection{First-Order Optimality Condition}

We determine the optimal baseline by setting the derivative with respect to $B_t$ to zero:
\begin{align}
\nabla_{B_t} \mathbb{E}_{\tau}[\|\tilde{g}_c\|^2] &= \nabla_{B_t} \mathbb{E}_{\tau}\left[ \left\| \sum_{k=1}^T s_k (G_k - B_k) \right\|^2 \right] \nonumber \\
&= \mathbb{E}_{\tau}\left[ 2 \left\langle \sum_{k=1}^T s_k (G_k - B_k), \frac{\partial}{\partial B_t} \left( s_t (G_t - B_t) \right) \right\rangle \right] \nonumber \\
&= -2 \mathbb{E}_{\tau}\left[ \langle \tilde{g}_c, s_t \rangle \right].
\end{align}
Setting this to zero requires $\mathbb{E}_{\tau}[\langle \tilde{g}_c, s_t \rangle] = 0$. Expanding the inner product:
\begin{equation}
\label{eq:full_expansion}
\mathbb{E}_{\tau}[\langle \tilde{g}_c, s_t \rangle] = \sum_{k=1}^T \mathbb{E}_{\tau}\left[(G_k - B_k) \langle s_k, s_t \rangle\right] = 0.
\end{equation}
Separating the diagonal term ($k = t$) from the cross-temporal terms ($k \neq t$):
\begin{equation}
\label{eq:separated}
\mathbb{E}_{\tau}\left[(G_t - B_t) \|s_t\|^2\right] + \sum_{k \neq t} \mathbb{E}_{\tau}\left[(G_k - B_k) \langle s_k, s_t \rangle\right] = 0.
\end{equation}

Solving for $B_t$ yields the theoretical optimal baseline:
\begin{equation}
\label{eq:theoretical_otb}
B_t^* = \frac{\mathbb{E}_{\tau}[G_t \|s_t\|^2] + \sum_{k \neq t} \mathbb{E}_{\tau}[(G_k - B_k) \langle s_k, s_t \rangle]}{\mathbb{E}_{\tau}[\|s_t\|^2]}.
\end{equation}

\begin{remark}[Role of $B_k$ Terms]
\label{remark:bk_terms}
Under the causal constraint $B_k = B_k(y_{\leq k})$, the baseline terms in the cross-correlations satisfy $\mathbb{E}_{\tau}[B_k \langle s_k, s_t \rangle] = 0$ for $k \neq t$, since the score function has zero conditional mean: $\mathbb{E}[s_j \mid y_{<j}] = 0$. Thus, Eq.~\eqref{eq:theoretical_otb} simplifies to:
\begin{equation}
\label{eq:theoretical_otb_simplified}
B_t^* = \frac{\mathbb{E}_{\tau}[G_t \|s_t\|^2] + \sum_{k \neq t} \mathbb{E}_{\tau}[G_k \langle s_k, s_t \rangle]}{\mathbb{E}_{\tau}[\|s_t\|^2]}.
\end{equation}
However, we retain the $(G_k - B_k)$ form in our derivation because: (i) it directly mirrors the advantage terms in the gradient estimator $\tilde{g}_c = \sum_k s_k(G_k - B_k)$, and (ii) it provides clearer motivation for the Gradient Consistency approximation, which posits that advantage-weighted cross-correlations can be consolidated through cumulative energy.
\end{remark}

This exact solution is computationally intractable: each score vector $s_t = \nabla_\theta \log \pi_\theta(y_t \mid y_{<t})$ has dimension equal to the parameter count $d$, and estimating the cross-correlations $\mathbb{E}[G_k \langle s_k, s_t \rangle]$ requires storing all $T$ score vectors. We therefore develop a tractable approximation.


\subsection{Gradient Consistency Approximation}
\label{app:gradient_consistency}

To obtain a tractable baseline, we invoke the \textit{Gradient Consistency} assumption~\citep{gurari2018gradientdescenthappenstiny}, which is empirically well-supported in deep Transformer optimization. This assumption posits that gradient directions exhibit positive correlation across tokens within a sequence:
\begin{equation}
\langle s_k, s_t \rangle \approx \alpha_{k,t} \|s_k\| \|s_t\|, \quad \text{with } \alpha_{k,t} > 0.
\end{equation}

We further observe that advantage terms $(G_k - B_k)$ tend to be positively correlated within a trajectory: tokens along high-return trajectories collectively exhibit positive advantages, while tokens along low-return trajectories collectively exhibit negative advantages. This occurs because trajectory-level return variation typically dominates token-level variation in LLM-RL settings with sparse terminal rewards.

Under these conditions, the cross-temporal terms reinforce the diagonal term coherently. This motivates the approximation:
\begin{equation}
\label{eq:gradient_consistency_approx}
\sum_{k \neq t} (G_k - B_k) \langle s_k, s_t \rangle \approx (G_t - B_t) \sum_{k \neq t} \|s_k\|^2.
\end{equation}
The intuition is that gradients at different positions push in similar directions, and the cumulative effect of cross-temporal correlations can be captured by the current advantage scaled by the total energy at other positions.

Substituting this approximation into Eq.~\eqref{eq:separated}:
\begin{align}
\mathbb{E}_{\tau}\left[(G_t - B_t) \|s_t\|^2\right] + \mathbb{E}_{\tau}\left[(G_t - B_t) \sum_{k \neq t} \|s_k\|^2\right] &\approx 0 \nonumber \\
\mathbb{E}_{\tau}\left[(G_t - B_t) \sum_{k=1}^T \|s_k\|^2\right] &= 0 \nonumber \\
\mathbb{E}_{\tau}\left[(G_t - B_t) \cdot W_T\right] &= 0,
\end{align}
where $W_T = \sum_{k=1}^T \|s_k\|^2$ denotes the total gradient energy over the trajectory.

Solving for $B_t$:
\begin{equation}
\label{eq:full_trajectory_baseline}
B_t^* = \frac{\mathbb{E}_{\tau}[G_t \cdot W_T]}{\mathbb{E}_{\tau}[W_T]}.
\end{equation}

\subsection{Causal Approximation}
\label{app:causal_approx}

The baseline in Eq.~\eqref{eq:full_trajectory_baseline} requires the full trajectory energy $W_T$, which depends on future gradient norms $\{\|s_k\|^2\}_{k>t}$ unavailable at decision time $t$. We now develop a causal surrogate using only $W_t = \sum_{k=1}^t \|s_k\|^2$.

Decompose the total energy as $W_T = W_t + W_{>t}$, where $W_{>t} = \sum_{k>t} \|s_k\|^2$ is the future energy. We invoke the following assumption:

\begin{assumption}[Future Energy Homogeneity]
\label{assump:future_energy}
For trajectories sampled from the same prompt, the future energy $W_{>t}$ exhibits low variance:
\begin{equation}
W_{>t}^{(i)} \approx \bar{W}_{>t} \quad \text{for all samples } i \in \{1, \ldots, N\}.
\end{equation}
\end{assumption}

This assumption is reasonable because $W_{>t}$ aggregates gradient magnitudes over future tokens, which depend primarily on the remaining sequence length and model architecture. While the \emph{directions} of future gradients vary with trajectory content, their \emph{magnitudes} are largely determined by structural factors that are shared across trajectories from the same prompt.

Under Assumption~\ref{assump:future_energy}, we can characterize the relationship between the full-trajectory baseline and its causal counterpart. Define:
\begin{equation}
\hat{B}_t^{W_t} = \frac{\sum_{i=1}^N G_t^{(i)} W_t^{(i)}}{\sum_{i=1}^N W_t^{(i)}}, \qquad \bar{G}_t = \frac{1}{N}\sum_{i=1}^N G_t^{(i)}.
\end{equation}
Here $\hat{B}_t^{W_t}$ is the causal energy-weighted baseline and $\bar{G}_t$ is the sample mean return (analogous to a value function baseline).

\begin{proposition}[Convex Decomposition]
\label{prop:convex}
Under Assumption~\ref{assump:future_energy}, the full-trajectory baseline decomposes as:
\begin{equation}
\label{eq:convex_combo}
\hat{B}_t^{W_T} = \alpha_t \cdot \hat{B}_t^{W_t} + (1 - \alpha_t) \cdot \bar{G}_t,
\end{equation}
where the mixing coefficient is:
\begin{equation}
\alpha_t = \frac{\bar{W}_t}{\bar{W}_t + \bar{W}_{>t}}, \qquad \bar{W}_t = \frac{1}{N}\sum_{i=1}^N W_t^{(i)}.
\end{equation}
\end{proposition}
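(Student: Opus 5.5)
We work with the empirical form of Eq.~\eqref{eq:full_trajectory_baseline}, namely
\[
\hat{B}_t^{W_T} = \frac{\sum_{i=1}^N G_t^{(i)} W_T^{(i)}}{\sum_{i=1}^N W_T^{(i)}}.
\]
The proof is a direct algebraic computation: substitute the energy decomposition $W_T^{(i)} = W_t^{(i)} + W_{>t}^{(i)}$, apply Assumption~\ref{assump:future_energy} to replace each $W_{>t}^{(i)}$ by the common value $\bar{W}_{>t}$, and regroup.

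The first step handles the numerator and denominator separately. Under the assumption, the denominator becomes
\[
\sum_i W_t^{(i)} + N\bar{W}_{>t} = N(\bar{W}_t + \bar{W}_{>t}).
\]
The numerator becomes
\[
\sum_i G_t^{(i)} W_t^{(i)} + \bar{W}_{>t}\sum_i G_t^{(i)}.
\]
Next we rewrite each numerator piece in terms of the two reference baselines. By the definition of $\hat{B}_t^{W_t}$,
\[
\sum_i G_t^{(i)} W_t^{(i)} = \hat{B}_t^{W_t}\sum_i W_t^{(i)} = N\bar{W}_t\,\hat{B}_t^{W_t}.
\]
By the definition of $\bar{G}_t$, $\sum_i G_t^{(i)} = N\bar{G}_t$. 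The numerator is therefore
\[
N\bigl(\bar{W}_t\hat{B}_t^{W_t} + \bar{W}_{>t}\bar{G}_t\bigr).
\]
Dividing by the denominator, the factor $N$ cancels and we obtain Eq.~\eqref{eq:convex_combo} with $\alpha_t = \bar{W}_t/(\bar{W}_t+\bar{W}_{>t})$ and $1-\alpha_t = \bar{W}_{>t}/(\bar{W}_t+\bar{W}_{>t})$.

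To confirm that the decomposition is genuinely convex, we note that all energies are sums of squared norms and hence nonnegative. So $\alpha_t\in[0,1]$ whenever $\bar{W}_t+\bar{W}_{>t}>0$, which we assume to avoid the degenerate zero-gradient case.

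The algebra itself presents no difficulty. The only delicate point is interpretive: Assumption~\ref{assump:future_energy} is stated as an approximation, so the identity holds exactly when $W_{>t}^{(i)}=\bar{W}_{>t}$ for every $i$ and approximately otherwise. For the approximate case, the error can be tracked by writing $W_{>t}^{(i)}=\bar{W}_{>t}+\epsilon_i$ with $\sum_i\epsilon_i=0$. Since $\sum_i \epsilon_i = 0$, the denominator is unchanged and the numerator gains exactly the term $\sum_i \epsilon_i G_t^{(i)}$. The residual is therefore
\[
\frac{\sum_i \epsilon_i\,(G_t^{(i)}-\bar{G}_t)}{N(\bar{W}_t+\bar{W}_{>t})},
\]
which is $N^{-1}\sum_i \epsilon_i (G_t^{(i)}-\bar{G}_t)$, the empirical covariance between the future-energy deviations and the returns, divided by $\bar{W}_t+\bar{W}_{>t}$. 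It vanishes under the assumption. I would state the exact identity under the assumption and mention this covariance remainder as a remark.
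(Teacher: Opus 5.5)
Your proposal is correct and follows essentially the same route as the paper. Both substitute $W_T^{(i)} = W_t^{(i)} + \bar{W}_{>t}$, rewrite the numerator in terms of $\hat{B}_t^{W_t}$ and $\bar{G}_t$, and normalize; the paper phrases this with the sums $S_W, S_G$ rather than sample means. Your covariance remainder for the approximate case is correct and goes beyond the paper: taking $\sum_i \epsilon_i = 0$ leaves the denominator unchanged.
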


\begin{proof}
Under the assumption $W_{>t}^{(i)} \approx \bar{W}_{>t}$ for all $i$:
\begin{align}
\hat{B}_t^{W_T} &= \frac{\sum_{i=1}^N G_t^{(i)} W_T^{(i)}}{\sum_{i=1}^N W_T^{(i)}} = \frac{\sum_{i=1}^N G_t^{(i)} (W_t^{(i)} + \bar{W}_{>t})}{\sum_{i=1}^N (W_t^{(i)} + \bar{W}_{>t})} \nonumber \\
&= \frac{\sum_{i=1}^N G_t^{(i)} W_t^{(i)} + \bar{W}_{>t} \sum_{i=1}^N G_t^{(i)}}{\sum_{i=1}^N W_t^{(i)} + N\bar{W}_{>t}}.
\end{align}
Let $S_W = \sum_{i=1}^N W_t^{(i)}$ and $S_G = \sum_{i=1}^N G_t^{(i)}$. Then:
\begin{align}
\hat{B}_t^{W_T} &= \frac{S_W}{S_W + N\bar{W}_{>t}} \cdot \frac{\sum_{i=1}^N G_t^{(i)} W_t^{(i)}}{S_W} + \frac{N\bar{W}_{>t}}{S_W + N\bar{W}_{>t}} \cdot \frac{S_G}{N} \nonumber \\
&= \alpha_t \cdot \hat{B}_t^{W_t} + (1 - \alpha_t) \cdot \bar{G}_t,
\end{align}
where $\alpha_t = \frac{S_W}{S_W + N\bar{W}_{>t}} = \frac{\bar{W}_t}{\bar{W}_t + \bar{W}_{>t}}$.
\end{proof}

\textbf{Interpretation.} Proposition~\ref{prop:convex} reveals that the optimal baseline interpolates between two components:
\begin{itemize}
    \item $\hat{B}_t^{W_t}$: the energy-weighted baseline, which leverages the correlation between cumulative gradient energy and returns to achieve variance reduction;
    \item $\bar{G}_t$: the simple mean baseline, corresponding to the classical value function approach.
\end{itemize}
The mixing coefficient $\alpha_t = \frac{\bar{W}_t}{\bar{W}_t + \bar{W}_{>t}}$ increases monotonically with $t$, reflecting that more of the trajectory's discriminative information becomes available as generation progresses.

For our practical implementation, we adopt the causal energy-weighted baseline:
\begin{equation}
\label{eq:causal_otb}
\hat{B}_t = \hat{B}_t^{W_t} = \frac{\sum_{i=1}^N G_t^{(i)} \cdot W_t^{(i)}}{\sum_{i=1}^N W_t^{(i)}}.
\end{equation}
This corresponds to emphasizing the energy-weighted component of Eq.~\eqref{eq:convex_combo}. The choice is motivated by two considerations:
\begin{enumerate}
    \item \textbf{Causality}: The baseline uses only information available at decision time $t$.
    \item \textbf{Discriminative power}: Unlike the mean baseline $\bar{G}_t$, the energy-weighted baseline assigns higher importance to high-gradient trajectories, directly addressing the variance structure identified by our analysis.
\end{enumerate}

\subsection{Empirical Estimator}
\label{app:empirical_otb}

To compute the OTB efficiently, we apply the Gradient-Norm Proxy developed in Section~\ref{sec:logit-graident}. We      approximate the cumulative energy as $\hat{W}_t = \sum_{j=1}^t \hat{w}_j$, where $\hat{w}_j$ is a computationally efficient proxy for $\|s_j\|^2$. This yields the final empirical estimator:
\begin{equation}
\hat{B}_t = \frac{\sum_{i=1}^N G_t^{(i)} \cdot \hat{W}_t^{(i)}}{\sum_{i=1}^N \hat{W}_t^{(i)}}.
\end{equation}
This matches the expression in Theorem~\ref{theorem:otb}.

\subsection{Extension: Logit-Gradient Proxy for Cross Terms}
\label{app:logit_cross_terms}

In Section~\ref{sec:logit-graident}, we introduced the Logit-Gradient Proxy $\hat{w}_t = \|\delta_t\|^2$ to approximate the parameter gradient norm $\|s_t\|^2$, where $\delta_t = e_{y_t} - \pi_t$ is the logit-space gradient. A natural question is whether a similar approach can approximate the cross-correlation terms $\langle s_k, s_t \rangle$ that appear in the theoretical optimal baseline.

\paragraph{Decomposition of Cross Terms.} Recall that the parameter-space score function can be written as:
\begin{equation}
s_t = J_t^\top \delta_t,
\end{equation}
where $J_t = \nabla_\theta z_t \in \mathbb{R}^{|V| \times d}$ is the Jacobian of logits with respect to parameters. The cross-correlation between score functions at positions $k$ and $t$ is:
\begin{equation}
\langle s_k, s_t \rangle = \delta_k^\top J_k J_t^\top \delta_t = \delta_k^\top M_{k,t} \delta_t,
\end{equation}
where $M_{k,t} = J_k J_t^\top \in \mathbb{R}^{|V| \times |V|}$.

For the diagonal case ($k = t$), our Logit-Gradient Proxy assumes $M_{t,t} = J_t J_t^\top \propto I$, yielding $\|s_t\|^2 \approx c \cdot \|\delta_t\|^2$. Extending this to the off-diagonal case, if we assume $M_{k,t} \approx \lambda_{k,t} I$ for some scalar $\lambda_{k,t}$, then:
\begin{equation}
\langle s_k, s_t \rangle \approx \lambda_{k,t} \cdot \langle \delta_k, \delta_t \rangle.
\end{equation}

\paragraph{Logit-Space Cross-Correlation.} The logit-space inner product $\langle \delta_k, \delta_t \rangle$ admits a closed-form expression:
\begin{align}
\langle \delta_k, \delta_t \rangle &= (e_{y_k} - \pi_k)^\top (e_{y_t} - \pi_t) \nonumber \\
&= e_{y_k}^\top e_{y_t} - e_{y_k}^\top \pi_t - \pi_k^\top e_{y_t} + \pi_k^\top \pi_t \nonumber \\
&= \mathbf{1}[y_k = y_t] - \pi_t(y_k) - \pi_k(y_t) + \langle \pi_k, \pi_t \rangle,
\end{align}
where $\mathbf{1}[y_k = y_t]$ is the indicator for token equality, $\pi_t(y_k)$ denotes the probability assigned to token $y_k$ at position $t$, and $\langle \pi_k, \pi_t \rangle = \sum_{v \in \mathcal{V}} \pi_k(v) \pi_t(v)$ is the dot product of probability distributions. Notably, this expression requires only forward-pass quantities.

\paragraph{Challenges for Direct Application.} Unlike the diagonal term $\|\delta_t\|^2 \geq 0$, the cross term $\langle \delta_k, \delta_t \rangle$ can be negative. For instance, when $y_k \neq y_t$ and the model assigns significant probability to $y_k$ at position $t$ (i.e., $\pi_t(y_k)$ is large), the cross term becomes negative. This sign variability complicates direct substitution into the gradient consistency framework, which relies on positive reinforcement across time steps.

Furthermore, the scaling factor $\lambda_{k,t}$ in the approximation $M_{k,t} \approx \lambda_{k,t} I$ may vary significantly across position pairs, unlike the diagonal case where a single constant suffices. Estimating these factors would require additional computation that undermines the efficiency gains of the proxy approach.

\paragraph{Justification for Cumulative Energy Approach.} These considerations provide additional motivation for our gradient consistency approximation (Eq.~\ref{eq:gradient_consistency_approx}), which sidesteps direct computation of cross terms by consolidating their effect through cumulative energy weighting. The approximation:
\begin{equation}
\sum_{k \neq t} (G_k - B_k) \langle s_k, s_t \rangle \approx (G_t - B_t) \sum_{k \neq t} \|s_k\|^2
\end{equation}
effectively assumes that the net contribution of cross-temporal correlations is positive and proportional to the total energy at other positions. This is a stronger but more tractable assumption than modeling each $\langle s_k, s_t \rangle$ individually.

\paragraph{Potential Extensions.} An alternative baseline variant could directly incorporate logit-space cross terms:
\begin{equation}
\tilde{W}_t = \sum_{k=1}^t \langle \delta_k, \delta_t \rangle^+ ,
\end{equation}
where $(\cdot)^+ = \max(0, \cdot)$ clips negative values to maintain non-negative weights. We leave empirical investigation of such variants to future work, noting that our current approach using cumulative energy $\hat{W}_t = \sum_{k=1}^t \hat{w}_k$ already achieves strong performance while maintaining computational simplicity.

\subsection{Connections with Prior Work}
\label{app:connect_value}

Our derivation provides a unified view of existing baselines as special cases of the optimal baseline under progressively stronger assumptions.

\paragraph{Value Function Baseline.} Starting from the theoretical optimal baseline (Eq.~\ref{eq:theoretical_otb}):
\begin{equation}
B_t^* = \frac{\mathbb{E}_{\tau}[G_t \|s_t\|^2] + \sum_{k \neq t} \mathbb{E}_{\tau}[(G_k - B_k) \langle s_k, s_t \rangle]}{\mathbb{E}_{\tau}[\|s_t\|^2]},
\end{equation}
the classical value function baseline emerges under two simplifying assumptions:
\begin{enumerate}
    \item \textbf{No cross-temporal correlation}: $\langle s_k, s_t \rangle = 0$ for $k \neq t$, which eliminates the second term;
    \item \textbf{Deterministic score norm given prefix}: $\|s_t\|^2 = c_t(y_{<t})$ depends only on the prefix, not on the choice of $y_t$ or future tokens.
\end{enumerate}
Under these assumptions, the optimal baseline conditioned on prefix $y_{<t}$ becomes:
\begin{equation}
B_t^*(y_{<t}) = \frac{\mathbb{E}[G_t \|s_t\|^2 \mid y_{<t}]}{\mathbb{E}[\|s_t\|^2 \mid y_{<t}]} = \frac{c_t(y_{<t}) \cdot \mathbb{E}[G_t \mid y_{<t}]}{c_t(y_{<t})} = \mathbb{E}[G_t \mid y_{<t}] = V(x, y_{<t}),
\end{equation}
recovering the value function. Note that conditioning on the prefix $y_{<t}$ is essential for this reduction; without it, we would obtain $\mathbb{E}_{\tau}[G_t] = \mathbb{E}_{y_{<t}}[V(x, y_{<t})]$, the expected value function averaged over all prefixes, rather than the state-dependent value function itself. This reveals that traditional actor-critic methods implicitly assume both gradient independence across time and homogeneous gradient magnitudes at each state—assumptions that are violated in deep Transformer architectures.

\paragraph{Isolated Energy Baseline.} Relaxing only the second assumption (allowing $\|s_t\|^2$ to vary) while maintaining cross-temporal independence yields:
\begin{equation}
B_t^{\text{isolated}} = \frac{\mathbb{E}_{\tau}[G_t \cdot \|s_t\|^2]}{\mathbb{E}_{\tau}[\|s_t\|^2]}.
\end{equation}
This baseline, studied in classical variance reduction literature~\citep{greensmith2004variance,peters2008reinforcement}, accounts for heterogeneous gradient magnitudes but ignores cross-temporal correlations. With our Gradient-Norm Proxy:
\begin{equation}
\hat{B}_t^{\text{isolated}} = \frac{\sum_{i=1}^N G_t^{(i)} \cdot \hat{w}_t^{(i)}}{\sum_{i=1}^N \hat{w}_t^{(i)}}.
\end{equation}

\paragraph{Optimal Token Baseline (This Work).} Our OTB relaxes both assumptions by incorporating cross-temporal correlations through the Gradient Consistency approximation:
\begin{equation}
\hat{B}_t^{\text{OTB}} = \frac{\sum_{i=1}^N G_t^{(i)} \cdot \hat{W}_t^{(i)}}{\sum_{i=1}^N \hat{W}_t^{(i)}},
\end{equation}
where $\hat{W}_t = \sum_{j=1}^t \hat{w}_j$ is the cumulative energy. The progression from value function to isolated energy to OTB reflects increasingly realistic modeling of gradient structure in deep networks.

\begin{table}[h]
\centering
\caption{Hierarchy of baselines as relaxations of simplifying assumptions.}
\begin{tabular}{lcc}
\toprule
\textbf{Baseline} & \textbf{Cross-correlation} & \textbf{Score norm} \\
\midrule
Value function $V_t$ & $\langle s_k, s_t \rangle = 0$ & Constant \\
Isolated energy $B_t^{\text{isolated}}$ & $\langle s_k, s_t \rangle = 0$ & Varies \\
OTB $B_t^{\text{OTB}}$ & Gradient Consistency & Varies \\
\bottomrule
\end{tabular}
\label{tab:baseline_hierarchy}
\end{table}

In Appendix~\ref{app:otb_isolated_energy}, we empirically compare OTB with the variant using isolated energy. The consistent performance gap (OTB $>$ Isolated) validates that both cross-temporal correlations and heterogeneous gradient magnitudes are significant in Transformer-based LLM-RL, justifying our more complete modeling approach.

%% file: appendix/offpolicy_otb.tex
\section{Optimal Token Baseline for Off-Policy Policy Gradient}
\label{app:otb_off}

We extend the derivation of the OTB to the off-policy setting. Let $\pi_\beta$ denote the behavior policy that generated the trajectories, and $\pi_\theta$ denote the target policy we wish to optimize. The importance sampling ratio at step $t$ is defined as:

\begin{equation}\rho_t = \frac{\pi_\theta(y_t \mid x, y_{<t})}{\pi_\beta(y_t \mid x, y_{<t})}.\end{equation}



To mitigate instability, we employ Truncated Importance Sampling (TIS)~\cite{yao2025offpolicy} with a clipped ratio $\bar{\rho}_t = \min(c, \rho_t)$. The off-policy estimator is defined as:
\begin{equation}
\tilde{g}_{\text{TIS}}(\tau) = \sum_{t=1}^T \bar{\rho}_t s_t (G_t - B_t).
\end{equation}
However, the use of clipped weights $\bar{\rho}_t$ introduces bias, meaning $\mathbb{E}[\tilde{g}_{\text{TIS}}] \neq \nabla J(\theta)$. Therefore, minimizing variance alone is insufficient; we must minimize the \textbf{Mean Squared Error (MSE)}.

First, we establish the decomposition of the MSE by adding and subtracting the expectation $\mathbb{E}[\tilde{g}_{\text{TIS}}]$ inside the norm:
\begin{align}
\text{MSE}(\tilde{g}_{\text{TIS}}) &= \mathbb{E} \left[ \| (\tilde{g}_{\text{TIS}} - \mathbb{E}[\tilde{g}_{\text{TIS}}]) + (\mathbb{E}[\tilde{g}_{\text{TIS}}] - \nabla J) \|^2 \right] \nonumber \\
&= \underbrace{\mathbb{E}[\|\tilde{g}_{\text{TIS}} - \mathbb{E}[\tilde{g}_{\text{TIS}}]\|^2]}_{\text{Variance}} + \underbrace{\|\mathbb{E}[\tilde{g}_{\text{TIS}}] - \nabla J\|^2}_{\text{Bias}^2} + \underbrace{2\mathbb{E}\left[ \langle \tilde{g}_{\text{TIS}} - \mathbb{E}[\tilde{g}_{\text{TIS}}], \mathbb{E}[\tilde{g}_{\text{TIS}}] - \nabla J \rangle \right]}_{\text{Cross-Term}}.
\end{align}
The Cross-Term vanishes as the expected deviation from the mean, $\mathbb{E}[\tilde{g}_{\text{TIS}} - \mathbb{E}[\tilde{g}_{\text{TIS}}]]$, is zero. Thus, $\text{MSE} = \text{Variance} + \text{Bias}^2$.

We can then simplify the MSE as follows:
\begin{align}
\text{MSE}(\tilde{g}_{\text{TIS}}) &= \mathbb{E}[\|\tilde{g}_{\text{TIS}} - \mathbb{E}[\tilde{g}_{\text{TIS}}]\|^2] + \|\mathbb{E}[\tilde{g}_{\text{TIS}}] - \nabla J\|^2 \nonumber \\
&=\mathbb{E}[\|\tilde{g}_{\text{TIS}} \|^2] - \| \mathbb{E}[\tilde{g}_{\text{TIS}}] \|^2 + \| \mathbb{E}[\tilde{g}_{\text{TIS}}] \|^2 - 2 \langle \mathbb{E}[\tilde{g}_{\text{TIS}}], \nabla J\rangle + \| \nabla J\|^2 \nonumber \\
&=\mathbb{E}[\|\tilde{g}_{\text{TIS}} \|^2] - 2 \langle \mathbb{E}[\tilde{g}_{\text{TIS}}], \nabla J\rangle + \| \nabla J\|^2.
\end{align}

To find the optimal baseline, we differentiate with respect to $B_t$. While the third term $\| \nabla J\|^2$ is constant, the interaction term $- 2 \langle \mathbb{E}[\tilde{g}_{\text{TIS}}], \nabla J\rangle$ depends on the unknown true gradient. To proceed, we rely on the small bias approximation.

We observe that the baseline $B_t$ primarily acts as a variance reduction control variate, rather than a bias correction term.
\begin{itemize}
    \item If the truncation is moderate, $\mathbb{E}[\tilde{g}_{\text{TIS}}] \approx \nabla J$, making the interaction term approximately constant ($-2\|\nabla J\|^2$).
    \item Even with significant truncation, the sensitivity of the expected direction $\mathbb{E}[\tilde{g}_{\text{TIS}}]$ to changes in $B_t$ is negligible compared to the sensitivity of the variance $\mathbb{E}[\|\tilde{g}_{\text{TIS}}\|^2]$.
\end{itemize}
Under this assumption, $\nabla_{B_t} \langle \mathbb{E}[\tilde{g}_{\text{TIS}}], \nabla J\rangle \approx 0$. Consequently, minimizing the MSE is asymptotically equivalent to minimizing the Expected Squared Norm $\mathbb{E}[\|\tilde{g}_{\text{TIS}}\|^2]$.

Incorporating TIS into the policy gradient, we define the modified realized energy as:

\begin{equation}
    W_t^{\text{TIS}} = \sum_{j=1}^t \bar{\rho}_t^2w_t.
\end{equation}

Base on the analysis in Appendix~\ref{app:dev_otb} (approximating cross-terms with realized energy), we minimize this norm yielding optimal baseline :
\begin{equation}
\min_{B_t} \mathbb{E}_{\tau \sim \pi_\beta} \left[ (G_t - B_t)^2 W_t^{\text{TIS}} \right] \implies B_t^{\text{TIS}} = \frac{\mathbb{E}_{y_{\le t} \sim \pi_\beta} [ G_t \cdot  W_t^{\text{TIS}} ]}{\mathbb{E}_{y_{\le t} \sim \pi_\beta} [  W_t^{\text{TIS}} ]}.
\end{equation}

Finally, we apply our Logit-Gradient Proxy to approximate the modified realized energy as $\hat{W}_t^{\text{TIS}} = \sum_{j=1}^t \bar{\rho}_j^2 \hat{w}_j$, resulting in the practical off-policy Optimal Token Baseline:
\begin{equation}
 \hat{B}_t^{\text{TIS}} = \frac{\sum_{i=1}^N G_t^{(i)} \cdot  \hat{W}_t^{{(i)}\text{TIS}} }{\sum_{i=1}^N  \hat{W}_t^{{(i)}\text{TIS}} }.
\end{equation}

%% file: appendix/variance_proxy.tex
\section{Gradient Variance Proxy}
\label{app:variance_proxy}

All gradient variance analysis in this work relies on the standard decomposition of the variance for a stochastic gradient vector. For a stochastic gradient $\hat{g}$ with expectation $g_{\text{true}} = \mathbb{E}[\hat{g}]$, the total variance is defined as the expected squared deviation from the mean:
\begin{equation}
\mathbb{V}[\hat{g}] \triangleq \mathbb{E}\left[ \| \hat{g} - \mathbb{E}[\hat{g}] \|^2 \right].
\end{equation}
Expanding this quadratic form yields the well-known variance decomposition identity:
\begin{equation}
\mathbb{V}[\hat{g}] = \mathbb{E}\left[ \| \hat{g}\|^2 \right] - \| \mathbb{E}[\hat{g}]\|^2.
\end{equation}
This identity decomposes the variance into the difference between the expected squared norm (total energy) and the squared norm of the expected gradient (signal energy).

In the context of policy optimization, the gradient estimator for a trajectory $\tau$ with a baseline $B$ is given by $\hat{g}(\tau) = \nabla_{\theta}\log \pi_{\theta}(\tau) \cdot A(\tau)$, where $A(\tau)$ is the approximated advantage. The squared Euclidean norm of this estimator is:
\begin{equation}
\|\hat{g}(\tau)\|^2 = \|\nabla_{\theta}\log \pi_{\theta}(\tau) \|^2 \cdot A(\tau)^2.
\end{equation}
To evaluate this efficiently without performing a full backward pass for every sample, we employ our Logit-Gradient Proxy, denoted as $\hat{W}(\tau)$, to approximate the squared norm of the score function: $\hat{W}(\tau) \approx \|\nabla_{\theta}\log \pi_{\theta}(\tau) \|^2$. This yields a computationally efficient estimator for the gradient energy of a single trajectory:
\begin{equation}
\|\hat{g}(\tau)\|^2 \approx \hat{W}(\tau) \cdot A(\tau)^2.
\end{equation}

Base on this, we decompose the statistics for a mini-batch of size $N$ into three components to monitor training dynamics:

\begin{itemize}
    \item \textbf{Mean Squared Magnitude (Total Power, $P_{\text{total}}$):} This estimates the second moment of the gradient distribution, representing the total energy (Signal + Noise) averaged over the batch.
    \begin{equation}
    P_{\text{total}} = \frac{1}{N}\sum_{i=1}^N \hat{W}(\tau_i) \cdot A(\tau_i)^2.
    \end{equation}
    
    \item \textbf{Squared Norm of the Empirical Mean (Signal Strength, $S$):} This measures the magnitude of the empirical mean gradient vector. It serves as a proxy for the strength of the consensus update direction.
    \begin{equation}
    S = \left\| \frac{1}{N}\sum_{i=1}^N \hat{g}(\tau_i) \right\|^2.
    \end{equation}
    
    \item \textbf{Estimated Variance of the Mean ($\widehat{\mathbb{V}}[\bar{g}]$):} We estimate the variance of the batch mean gradient. High values indicate significant disagreement among trajectories, signaling potential training instability. Using the unbiased estimator for the variance of the mean derived from the batch statistics $P_{\text{total}}$ and $S$:
    \begin{equation}
    \widehat{\mathbb{V}}[\bar{g}] = \frac{1}{N-1} \left( P_{\text{total}} - S \right).
    \end{equation}
\end{itemize}

We utilize these proxies throughout our experiments to demonstrate that the Optimal Token Baseline minimizes $\widehat{\mathbb{V}}[\bar{g}]$, thereby improving training stability.

%% file: appendix/exp.tex
\section{Detailed Experimental Results}
\label{app:additional_exp}

In this section, we present the detailed experimental settings and additional empirical results to demonstrate the superior training stability of the Optimal Token Baseline.

\subsection{Detailed Settings}
\label{app:exp_detailed_set}

We implement Optimal Token Baseline on top of the VeRL\footnote{\url{https://github.com/volcengine/verl}} framework. Our experiments cover two distinct paradigms: Single-Turn Reasoning (math) and Multi-Turn Tool-Integrated Reasoning (TIR), as shown in~\cref{fig:overview_task}. For TIR training, we follow the setup from SimpleTIR~\cite{xue2025simpletir}, requiring the LLM to generate Python code during the reasoning process and using Sandbox Fusion as an interpreter to execute the generated code. We utilize rule-based reward signal to optimize the policy for both paradigms.

\begin{figure}[htbp]
    \centering
    \subfigure[Single-Turn Reasoning Training]{
    \includegraphics[width=0.45\linewidth]{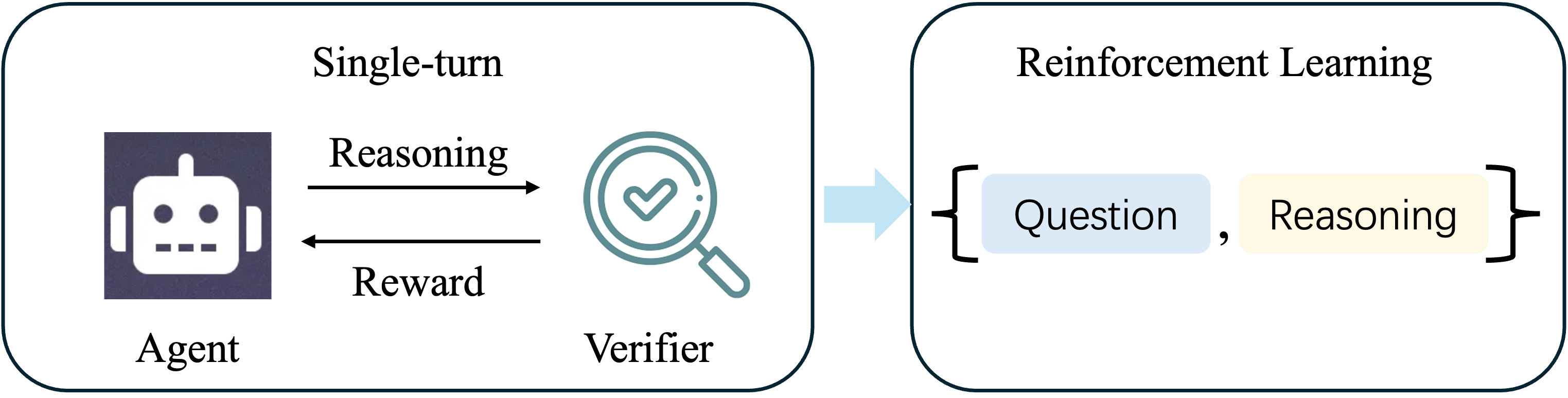}}
    \hspace{1cm}
    \subfigure[Multi-Turn TIR Training]{
    \includegraphics[width=0.45\linewidth]{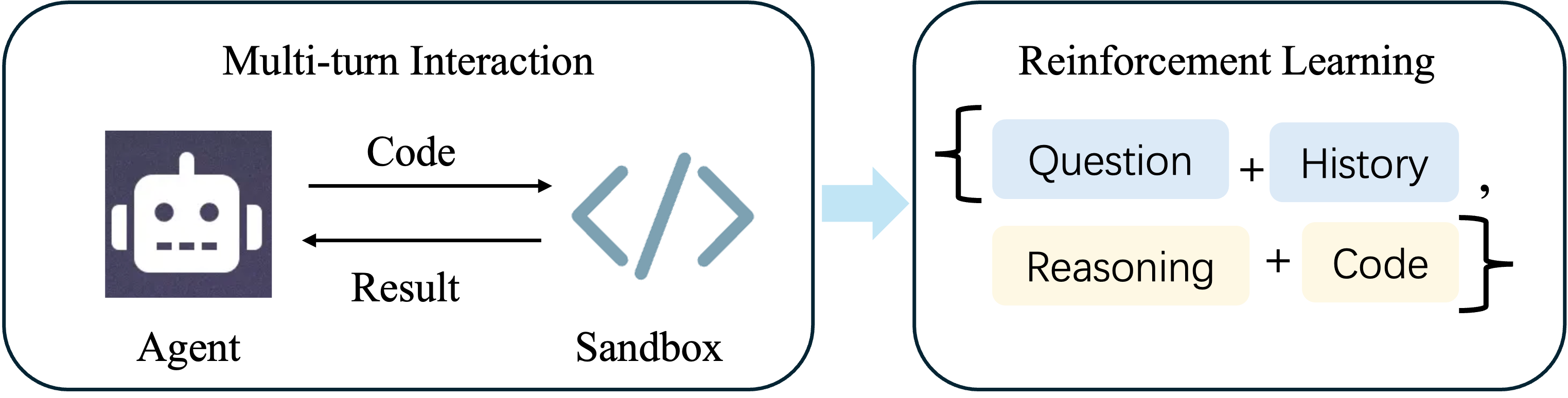}}
    \caption{Overview of Single-Turn Reasoning and Multi-Turn TIR training in our experimental settings.}
    \label{fig:overview_task}
\end{figure}

For training, we employ unaligned base models from the Qwen series and use the deduplicated DAPO-MATH-17k as the training dataset. We adopt a full on-policy RL paradigm, where the update batch size is equivalent to the rollout batch size. We fix all hyperparameters uniformly, including a batch size of 128, a maximum response sequence length of 8192, a group size of 16, and a learning rate of $10^{-6}$. Our training objective only optimizes the policy loss, with no KL divergence loss or entropy regularization term incorporated. During the rollout phase, both the sampling temperature and the Top-p are set to 1.0. For TIR settings, we set the maximum number of interaction turns to 5. These standardized training configurations are formalized in~\cref{tab:hyperparams}.

\begin{table}[h]
\centering
\caption{Hyperparameter Training Settings}
\begin{tabular}{lc}
\hline
\textbf{Hyperparameter} & \textbf{Value} \\ \hline
Learning Rate & $1 \times 10^{-6}$ \\
Optimizer & AdamW \\
Update Batch Size & 128 \\
Rollout Batch Size & 128 \\
Group Size ($N$) & 16 (4 for ablation) \\
Max Sequence Length & 8192 \\
KL Coefficient  & 0.0 \\
Entropy Bonus & 0.0 \\
Rollout Temperature & 1.0 \\
Rollout Top-$p$ & 1.0 \\
Maximum Turns & 5 (for TIR) \\ \hline
\end{tabular}
\label{tab:hyperparams}
\end{table}

For evaluation, we conduct a comparison on a suite of widely-used benchmarks: MATH500, AMC23, AIME24, and AIME25. We set the generation temperature to 1.0 and the Top-p to 0.95, and report the avg@32 scores.
\subsection{Detailed Training Curves}
\label{app:detailed_curve}

We first establish the superior performance of the Optimal Token Baseline across all benchmarks, as presented in \cref{tab:score_res}. To further substantiate the notable training stability of OTB—a key advantage over competing approaches—we visualize the full training dynamics of all methods in~\cref{fig:all_score_curves}. Specifically, we report the training curves for the AIME24 and AIME25, as we logged the real-time training performance of these two datasets.

\begin{figure}[htbp]
    \centering
    \subfigure[Single-Turn Reasoning Training]{
    \includegraphics[width=0.45\linewidth]{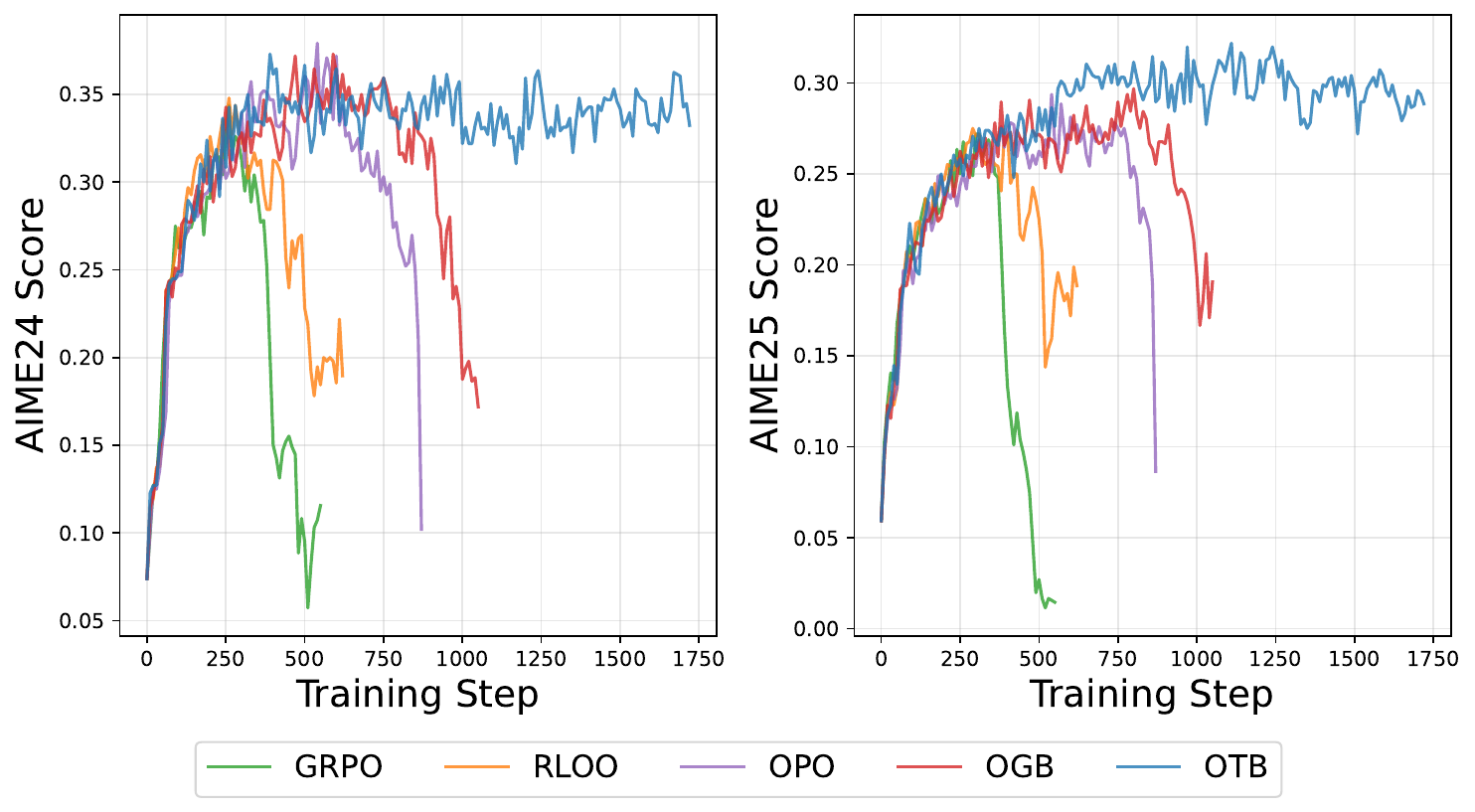}}
    \hspace{0.5cm}
    \subfigure[Multi-Turn TIR Training]{
    \includegraphics[width=0.45\linewidth]{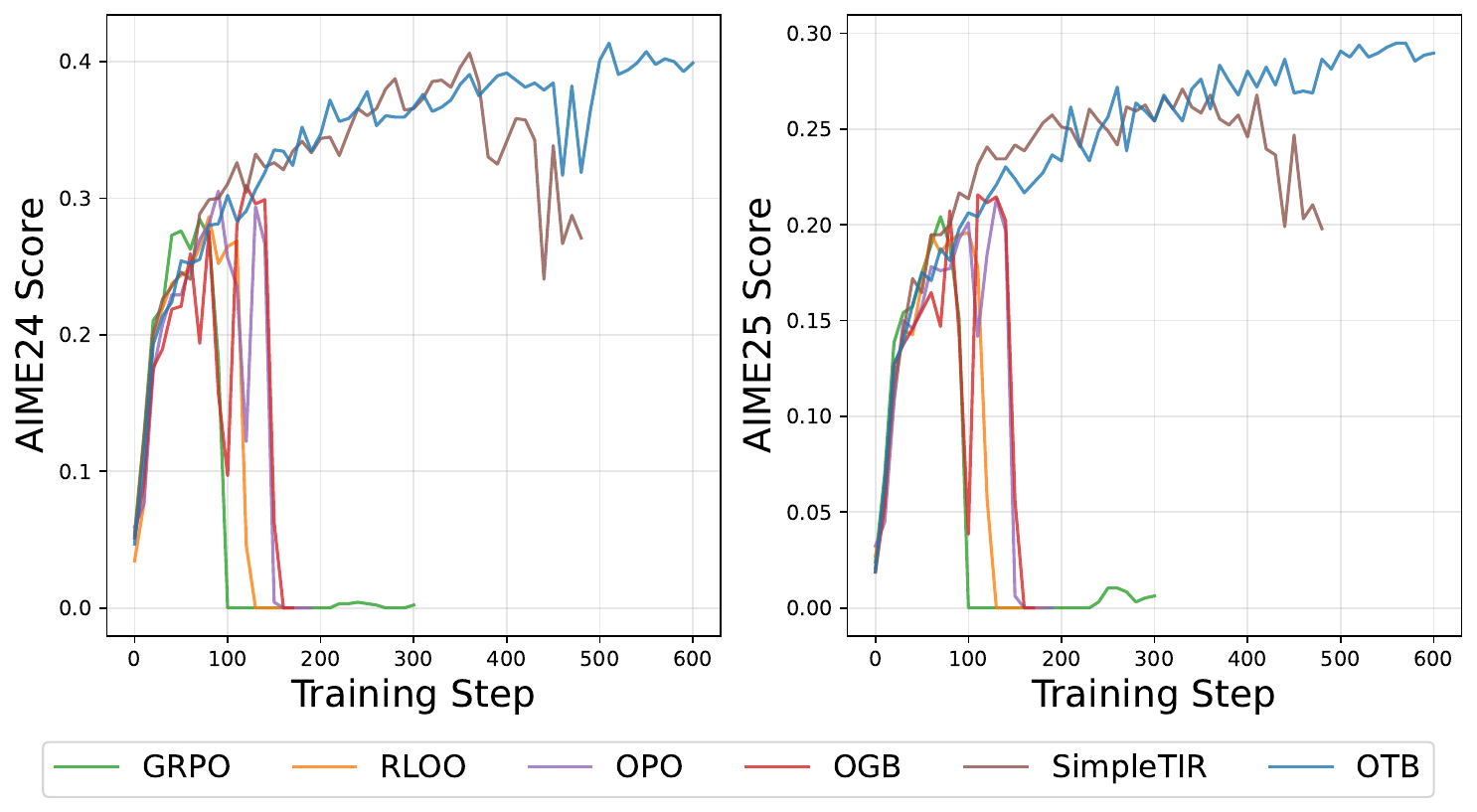}}
    \caption{Training curves for all methods.}
    \label{fig:all_score_curves}
\end{figure}

As observed in the visualization, all comparative methods are susceptible to training collapse, with this phenomenon being exacerbated under TIR setting. With the exception of SimpleTIR, competing approaches exhibit premature training collapse within approximately 100 training steps. Even SimpleTIR, however, is unable to evade training collapse, which fundamentally prevents it from achieving higher scores. In contrast, OTB attains the highest performance across all evaluated benchmarks by maintaining stable training dynamics throughout the entire optimization process. This finding underscores that RL training stability is a critical prerequisite for unlocking the advanced reasoning potential of LLM.

\subsection{Training Stability of OTB}
\label{app:detailed_stability}

RL training instability stems from bias and variance in gradient estimation during policy updates. To quantitatively characterize training stability, we monitor a suite of critical metrics spanning both bias and variance perspectives. 

For the bias metrics, we measure the Training-Rollout Mismatch between the training policy $\pi_{\theta}$ and the behavioral policy $\pi_{\beta}$. Given that our experiments are conducted on dense models with full on-policy updates, the mismatch can only be attributed to the engine discrepancies between the training (FSDP) and the rollout (vLLM). In addition to logging the KL divergence between the two policies, we also record the log absolute perplexity (PPL) gap as follows:
\begin{equation}
    \frac{1}{N}\left| \frac{1}{T} \sum_{t=1}^{T} \log \pi_{\theta}(y_t \mid x, y_{<t}) - \frac{1}{T}\sum_{t=1}^{T}\log \pi_{\beta}(y_t \mid x, y_{<t}) \right|,
\end{equation}
where $N$ denotes the batch size and $T$ denotes the response length.
For the variance metrics, we track two core proxies: the gradient variance calculated via the method detailed in Appendix~\ref{app:variance_proxy}, and the gradient norm. 
Beyond these bias and variance metrics, we also observe additional key indicators for assessing overall training stability, including the average policy entropy and the response length.

    

\begin{figure}[htbp]
    \centering
    \includegraphics[width=0.95\linewidth]{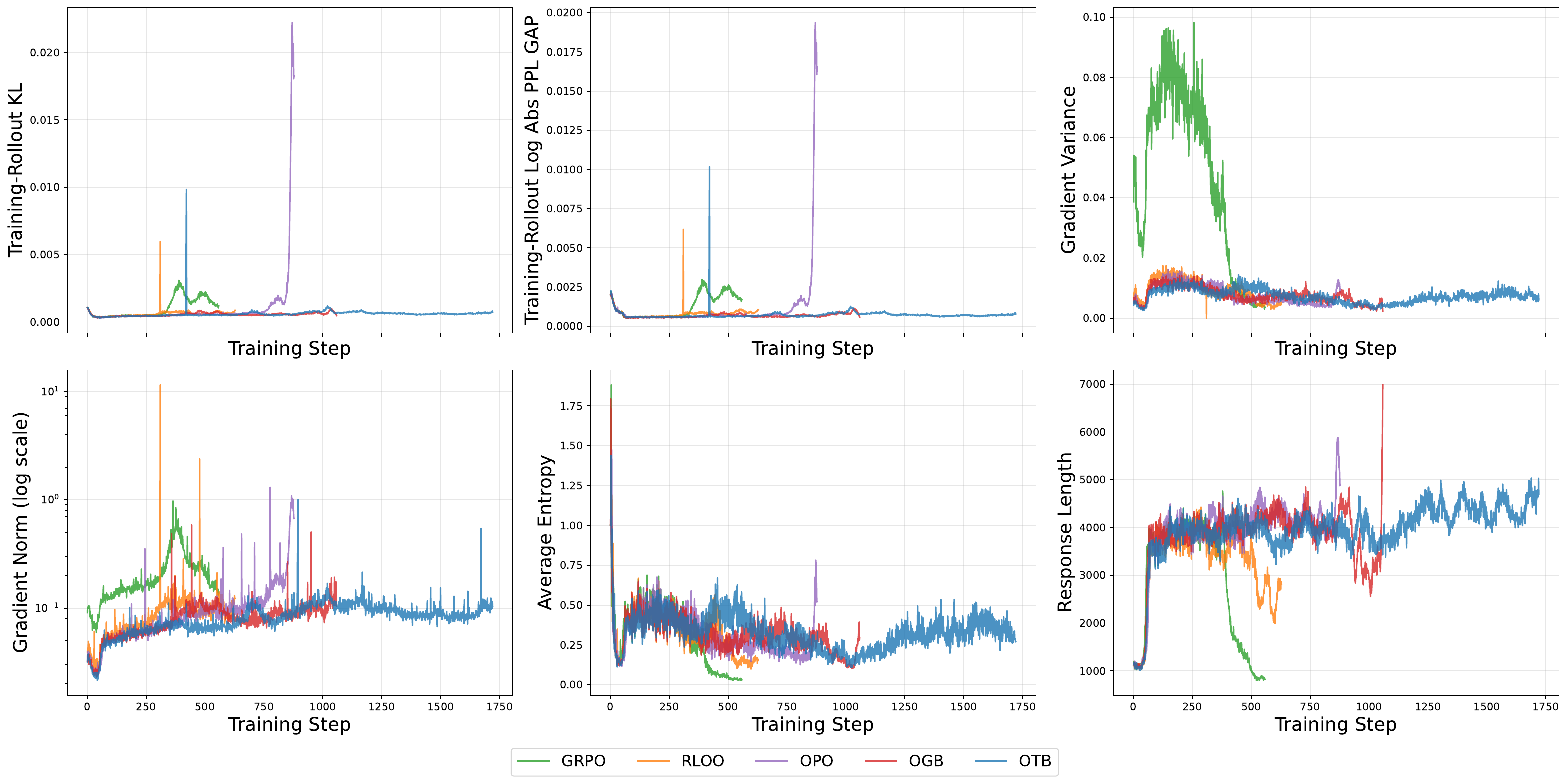}
    \caption{Training metrics for all methods under Single-Turn Reasoning.}
    \label{fig:all_score_metrics_single}
\end{figure}

\begin{figure}[htbp]
    \centering
    \includegraphics[width=0.95\linewidth]{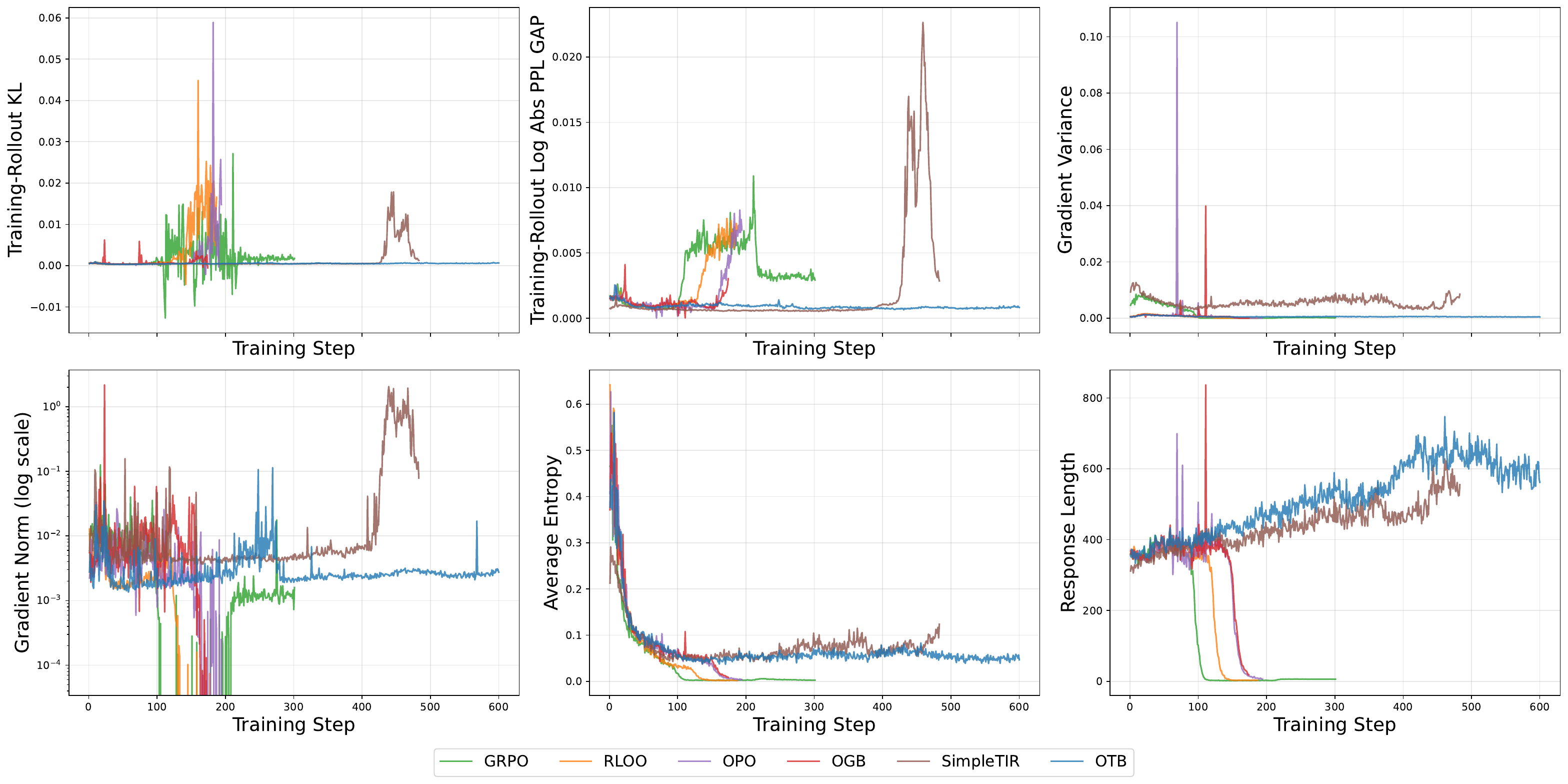}
    \caption{Training metrics for all methods under Multi-Turn TIR Training.}
    \label{fig:all_score_metrics_multi}
\end{figure}

As illustrated in \cref{fig:all_score_metrics_single,fig:all_score_metrics_multi}, the Optimal Token Baseline maintains consistent stability across all monitored metrics throughout the training process, enabling sustained optimization over extended training steps without experiencing catastrophic collapse. Notably, under the TIR settings—where response lengths exhibit substantial variability within a single group, introducing additional noise and optimization challenges—OTB demonstrates significantly superior stability compared to all competing methods. This empirical observation underscores that OTB’s stability advantages are not only consistent across tasks but also amplified in high-variance training regimes, validating its robustness as a general-purpose baseline for LLM-RL.



\subsection{Results on Larger Model}
\label{app:res_larger_model}

To further demonstrate the superiority of the Optimal Token Baseline, we conduct extensive experiments on larger-size models, specifically employing Qwen3-8B-Base for Multi-Turn TIR and Qwen3-14B-Base for Single-Turn Reasoning. As illustrated in \cref{fig:res_larger_model}, OTB consistently maintains stable training evidenced by the smooth gradient norm, even as model size expands and optimization complexity escalates. Critically, this sustained training stability directly translates to substantially higher scores relative to competing methods. These results underscore that OTB’s stability generalize robustly to larger-scale architectures, validating its practical utility for LLM-RL.

\begin{figure}[htbp]
    \centering
    \subfigure[Single-Turn Reasoning on Qwen3-14B-Base]{
    \includegraphics[width=0.45\linewidth]{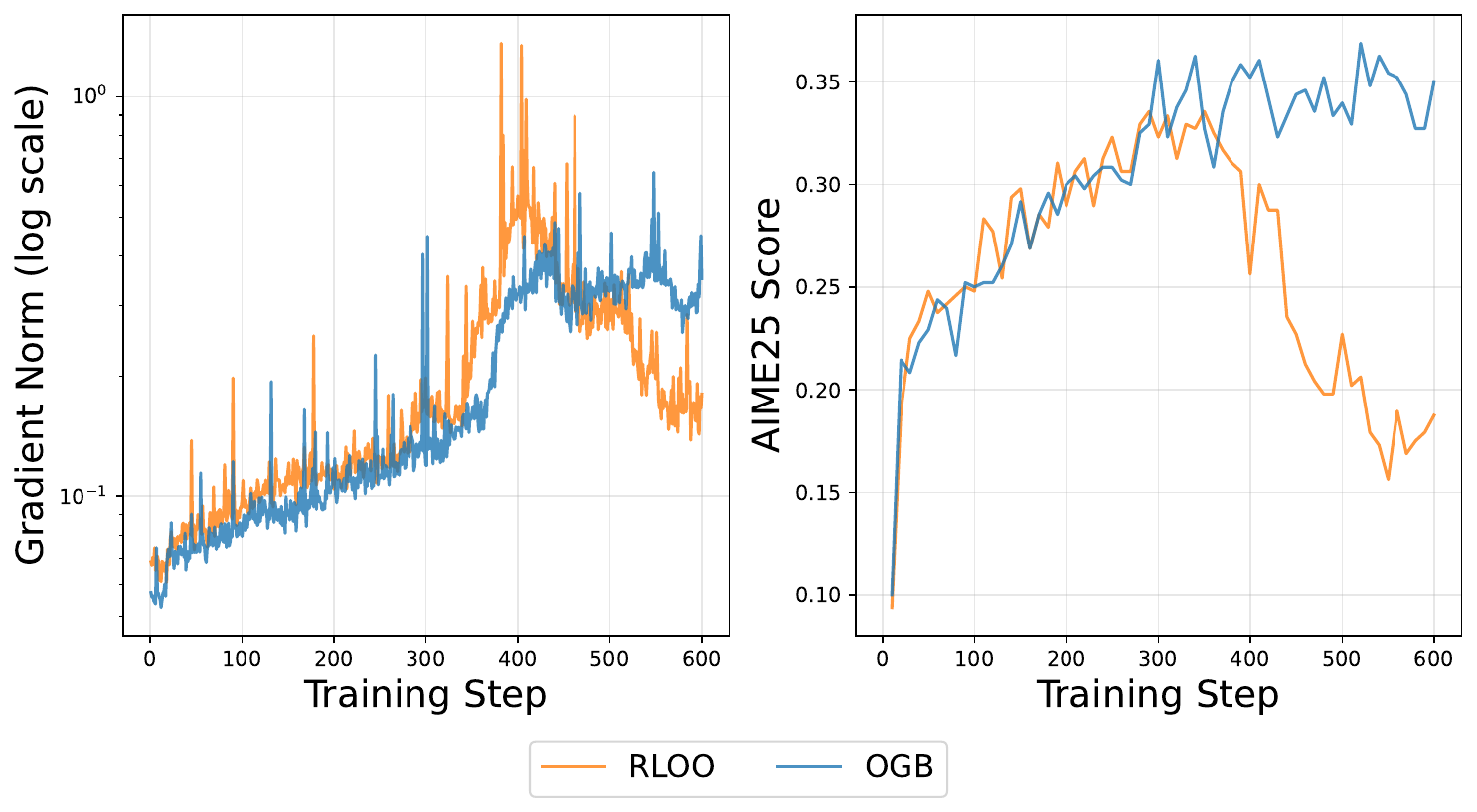}}
    \hspace{0.5cm}
    \subfigure[Multi-Turn TIR on Qwen3-8B-Base]{
    \includegraphics[width=0.45\linewidth]{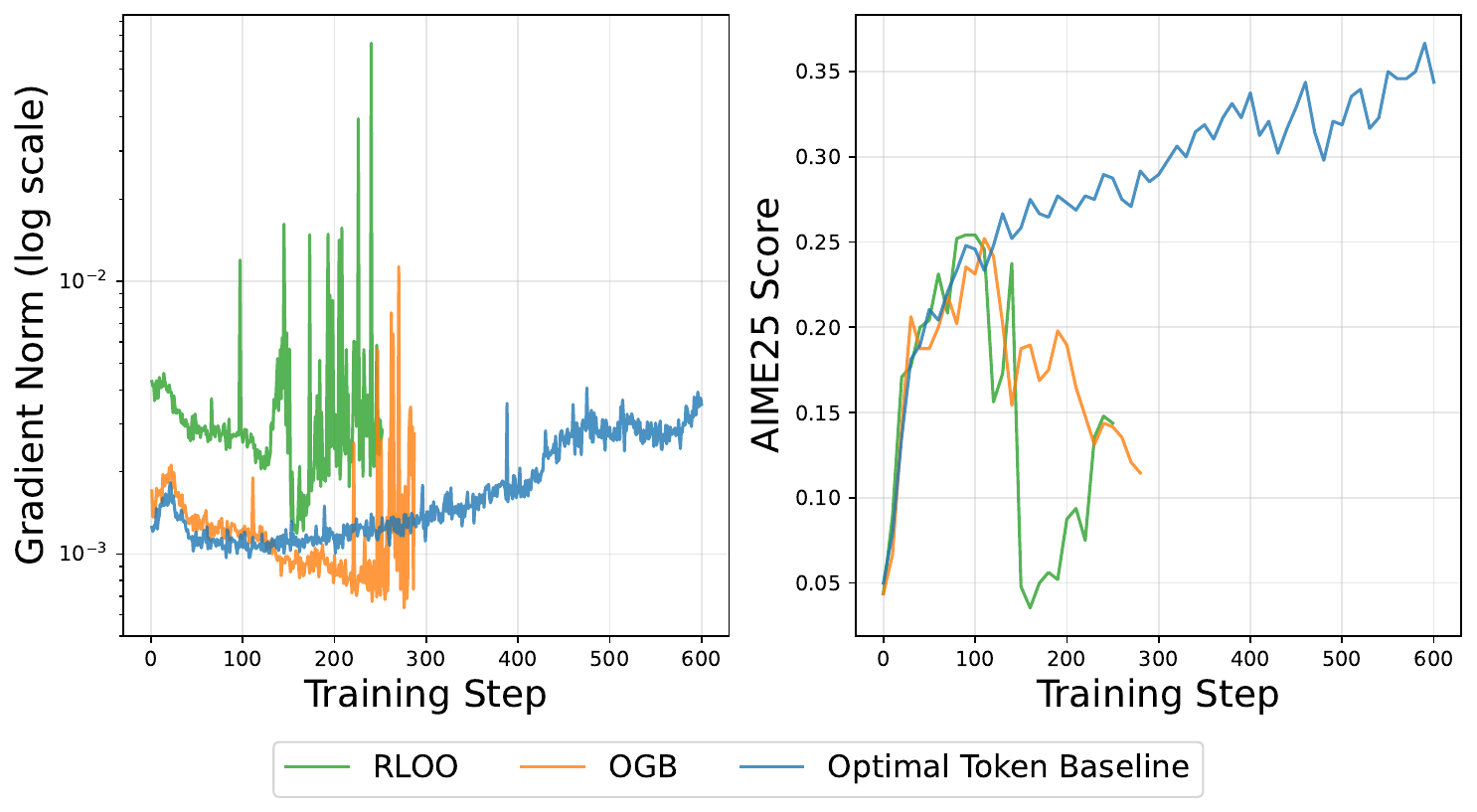}}
    \caption{Results under larger model.}
    \label{fig:res_larger_model}
\end{figure}

\subsection{Comparison with OTB Variants}
\label{app:otb_isolated_energy}

In Appendix~\ref{app:connect_value}, we analyze a variant of the Optimal Token Baseline that employs \textit{isolated energy} rather than realized energy as the weighting mechanism. This simplified variant relies on the assumption that the current score function $s_t$ is independent of past score functions $s_{k<t}$.
Figure~\ref{fig:vs_nocum} presents an empirical comparison between this ``OTB w. Isolated Energy'' variant and our proposed OTB (which utilizes realized energy). As illustrated, the isolated energy variant consistently fails to match the performance of the proposed OTB across both the Single-Turn Reasoning and TIR settings.
It suggests that the current score function is significantly correlated with past scores, thereby invalidating the independence assumption. Conversely, this evidence validates the gradient consistency assumption central to our derivation. It underscores that the efficacy of OTB stems from its ability to account for accumulated noise and historical dependencies via the realized energy, rather than treating each token's gradient contribution in isolation.

\begin{figure}[htbp]
    \centering
    \subfigure[Results on AIME25.]{
    \includegraphics[width=0.45\linewidth]{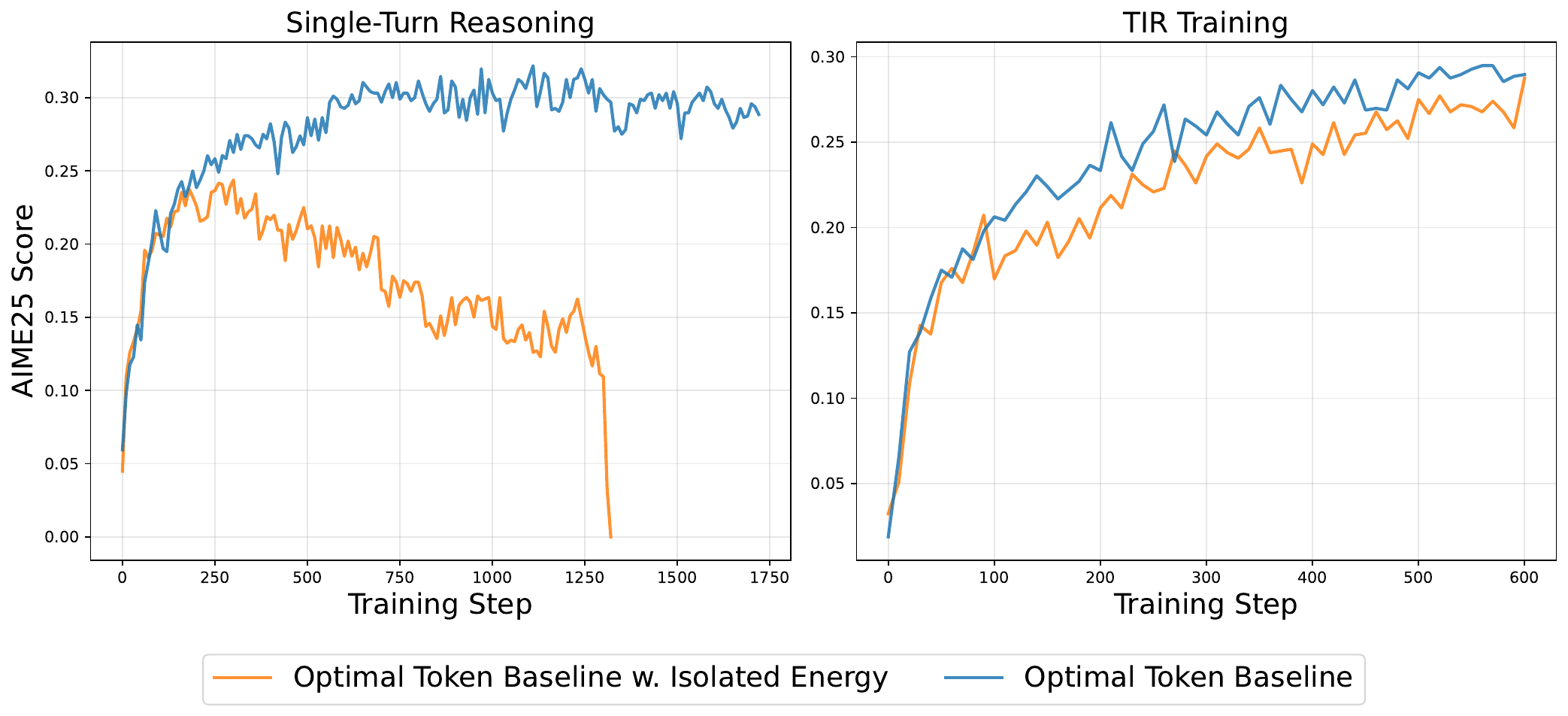}}
    \hspace{0.5cm}
    \subfigure[Results on AIME24.]{
    \includegraphics[width=0.45\linewidth]{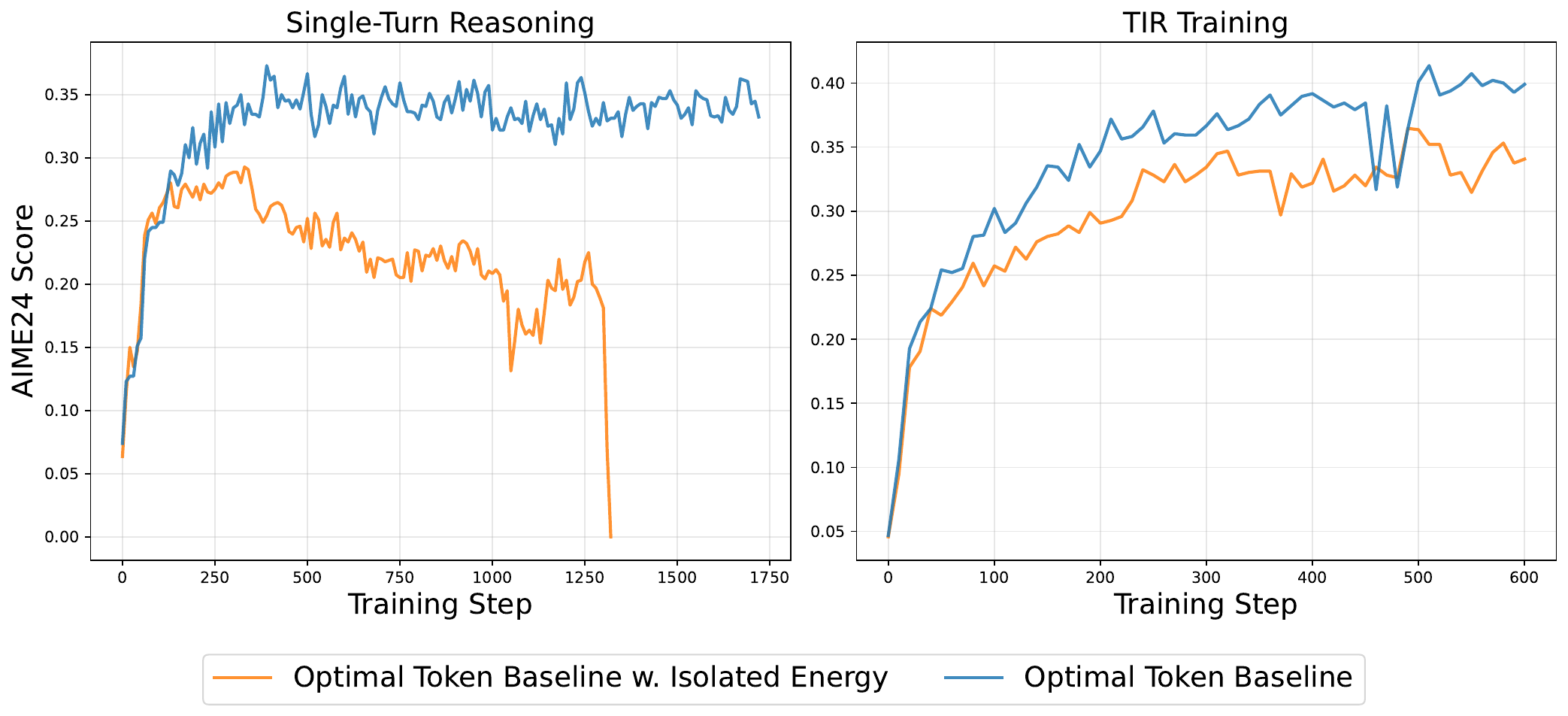}}
    \caption{Comparison between the proposed Optimal Token Baseline and the variant using isolated energy.}
    \label{fig:vs_nocum}
\end{figure}


%% file: ref.bib
@article{guo2025deepseek,
  title={Deepseek-r1: Incentivizing reasoning capability in llms via reinforcement learning},
  author={Guo, Daya and Yang, Dejian and Zhang, Haowei and Song, Junxiao and Zhang, Ruoyu and Xu, Runxin and Zhu, Qihao and Ma, Shirong and Wang, Peiyi and Bi, Xiao and others},
  journal={arXiv preprint arXiv:2501.12948},
  year={2025}
}

@article{zeng2025simplerl,
  title={Simplerl-zoo: Investigating and taming zero reinforcement learning for open base models in the wild},
  author={Zeng, Weihao and Huang, Yuzhen and Liu, Qian and Liu, Wei and He, Keqing and Ma, Zejun and He, Junxian},
  journal={arXiv preprint arXiv:2503.18892},
  year={2025}
}

@article{jin2025search,
  title={Search-r1: Training llms to reason and leverage search engines with reinforcement learning},
  author={Jin, Bowen and Zeng, Hansi and Yue, Zhenrui and Yoon, Jinsung and Arik, Sercan and Wang, Dong and Zamani, Hamed and Han, Jiawei},
  journal={arXiv preprint arXiv:2503.09516},
  year={2025}
}

@article{xue2025simpletir,
  title={Simpletir: End-to-end reinforcement learning for multi-turn tool-integrated reasoning},
  author={Xue, Zhenghai and Zheng, Longtao and Liu, Qian and Li, Yingru and Zheng, Xiaosen and Ma, Zejun and An, Bo},
  journal={arXiv preprint arXiv:2509.02479},
  year={2025}
}

@article{schulman2015high,
  title={High-dimensional continuous control using generalized advantage estimation},
  author={Schulman, John and Moritz, Philipp and Levine, Sergey and Jordan, Michael and Abbeel, Pieter},
  journal={arXiv preprint arXiv:1506.02438},
  year={2015}
}

@article{shao2024deepseekmath,
  title={Deepseekmath: Pushing the limits of mathematical reasoning in open language models},
  author={Shao, Zhihong and Wang, Peiyi and Zhu, Qihao and Xu, Runxin and Song, Junxiao and Bi, Xiao and Zhang, Haowei and Zhang, Mingchuan and Li, YK and others},
  journal={arXiv preprint arXiv:2402.03300},
  year={2024}
}

@article{ahmadian2024back,
  title={Back to basics: Revisiting reinforce style optimization for learning from human feedback in llms},
  author={Ahmadian, Arash and Cremer, Chris and Gall{\'e}, Matthias and Fadaee, Marzieh and Kreutzer, Julia and Pietquin, Olivier and {\"U}st{\"u}n, Ahmet and Hooker, Sara},
  journal={arXiv preprint arXiv:2402.14740},
  year={2024}
}

@misc{gurari2018gradientdescenthappenstiny,
      title={Gradient Descent Happens in a Tiny Subspace}, 
      author={Guy Gur-Ari and Daniel A. Roberts and Ethan Dyer},
      year={2018},
      eprint={1812.04754},
      archivePrefix={arXiv},
      primaryClass={cs.LG},
      url={https://arxiv.org/abs/1812.04754}, 
}

@incollection{DAYAN199145,
title = {Reinforcement Comparison},
editor = {David S. Touretzky and Jeffrey L. Elman and Terrence J. Sejnowski and Geoffrey E. Hinton},
booktitle = {Connectionist Models},
publisher = {Morgan Kaufmann},
pages = {45-51},
year = {1991},
isbn = {978-1-4832-1448-1},
doi = {https://doi.org/10.1016/B978-1-4832-1448-1.50011-1},
url = {https://www.sciencedirect.com/science/article/pii/B9781483214481500111},
author = {Peter Dayan},
}

@article{greensmith2004variance,
  title={Variance reduction techniques for gradient estimates in reinforcement learning},
  author={Greensmith, Evan and Bartlett, Peter L and Baxter, Jonathan},
  journal={Journal of Machine Learning Research},
  volume={5},
  number={Nov},
  pages={1471--1530},
  year={2004}
}

@article{weaver2013optimal,
  title={The optimal reward baseline for gradient-based reinforcement learning},
  author={Weaver, Lex and Tao, Nigel},
  journal={arXiv preprint arXiv:1301.2315},
  year={2013}
}

@article{li2023remax,
  title={Remax: A simple, effective, and efficient reinforcement learning method for aligning large language models},
  author={Li, Ziniu and Xu, Tian and Zhang, Yushun and Lin, Zhihang and Yu, Yang and Sun, Ruoyu and Luo, Zhi-Quan},
  journal={arXiv preprint arXiv:2310.10505},
  year={2023}
}

@misc{yao2025offpolicy,
  title = {Your Efficient RL Framework Secretly Brings You Off-Policy RL Training},
  url = {https://fengyao.notion.site/off-policy-rl},
  author = {Yao, Feng and Liu, Liyuan and Zhang, Dinghuai and Dong, Chengyu and Shang, Jingbo and Gao, Jianfeng},
  journal = {Feng Yao's Notion},
  year = {2025},
  month = aug,
}

@online{liu-li-2025-rl-collapse,
  title = {When Speed Kills Stability: Demystifying {RL} Collapse from the Training-Inference Mismatch},
  author = {Liu, Jiacai and Li, Yingru and Fu, Yuqian and Wang, Jiawei and Liu, Qian and Shen, Yu},
  year = {2025},
  month = sep,
  url = {https://richardli.xyz/rl-collapse}
}

@online{lr_decay,
  title = {Beyond Precision: Why Training-Inference Mismatch is an Optimization Problem and How Simple LR Scheduling Fixes It},
  author = {Yaxiang, Zhang and Yingru, Li and Jiacai, Liu and Ziniu, Li and Jiawei, Xu and Qian, Liu},
  year = {2025},
  month = Dec,
  url = {https://richardli.xyz/mismatch-lr-schedule}
}

@article{li2025trust,
  title={Trust Region Masking for Long-Horizon LLM Reinforcement Learning},
  author={Li, Yingru and Liu, Jiacai and Xu, Jiawei and Tong, Yuxuan and Li, Ziniu and Wang, Baoxiang},
  journal={arXiv preprint arXiv:2512.23075},
  year={2025}
}

@article{schulman2017proximal,
  title={Proximal policy optimization algorithms},
  author={Schulman, John and Wolski, Filip and Dhariwal, Prafulla and Radford, Alec and Klimov, Oleg},
  journal={arXiv preprint arXiv:1707.06347},
  year={2017}
}

@article{zheng2025group,
  title={Group sequence policy optimization},
  author={Zheng, Chujie and Liu, Shixuan and Li, Mingze and Chen, Xiong-Hui and Yu, Bowen and Gao, Chang and Dang, Kai and Liu, Yuqiong and Men, Rui and Yang, An and others},
  journal={arXiv preprint arXiv:2507.18071},
  year={2025}
}

@article{hao2025policy,
  title={On-Policy RL with Optimal Reward Baseline},
  author={Hao, Yaru and Dong, Li and Wu, Xun and Huang, Shaohan and Chi, Zewen and Wei, Furu},
  journal={arXiv preprint arXiv:2505.23585},
  year={2025}
}

@article{grattafiori2024llama,
  title={The llama 3 herd of models},
  author={Grattafiori, Aaron and Dubey, Abhimanyu and Jauhri, Abhinav and Pandey, Abhinav and Kadian, Abhishek and Al-Dahle, Ahmad and Letman, Aiesha and Mathur, Akhil and Schelten, Alan and Vaughan, Alex and others},
  journal={arXiv preprint arXiv:2407.21783},
  year={2024}
}

@article{bai2023qwen,
  title={Qwen technical report},
  author={Bai, Jinze and Bai, Shuai and Chu, Yunfei and Cui, Zeyu and Dang, Kai and Deng, Xiaodong and Fan, Yang and Ge, Wenbin and Han, Yu and Huang, Fei and others},
  journal={arXiv preprint arXiv:2309.16609},
  year={2023}
}

@article{williams1992simple,
  title={Simple statistical gradient-following algorithms for connectionist reinforcement learning},
  author={Williams, Ronald J},
  journal={Machine learning},
  volume={8},
  number={3},
  pages={229--256},
  year={1992},
  publisher={Springer}
}

@article{peters2008reinforcement,
  title={Reinforcement learning of motor skills with policy gradients},
  author={Peters, Jan and Schaal, Stefan},
  journal={Neural networks},
  volume={21},
  number={4},
  pages={682--697},
  year={2008},
  publisher={Elsevier}
}
